\newtheorem{theorem}{Theorem}
\newtheorem{lemma}{Lemma}
\newtheorem{corollary}{Corollary}
\newtheorem{remark}{Remark}
\begin{document}
\newcommand{\NAM}{\textsc{LCC-UCB}}
\renewcommand{\cite}[1]{\citep{#1}}
%

%

\title{Multi-Agent Multi-Armed Bandits with Limited Communication}

\author{Mridul Agarwal, Vaneet Aggarwal, Kamyar Azizzadenesheli \thanks{The authors are with Purdue University, West Lafayette IN 47907, USA, email:\{agarw180,vaneet,kamyar\}@purdue.edu.}}

\maketitle
\begin{abstract}

We consider the problem where $N$ agents collaboratively interact with an instance of a stochastic $K$ arm bandit problem for $K \gg N$. The agents aim to simultaneously minimize the cumulative regret over all the agents for a total of $T$ time steps, the number of communication rounds, and the number of bits in each communication round. 
We present Limited Communication Collaboration - Upper Confidence Bound (\NAM), a doubling-epoch based algorithm where each agent communicates only after the end of the epoch and shares the index of the best arm it knows. With our algorithm, \NAM, each agent enjoys a regret of $\tilde{O}\left(\sqrt{({K/N}+ N)T}\right)$, communicates for $O(\log T)$ steps and broadcasts $O(\log K)$ bits in each communication step. We extend the work to sparse graphs with maximum degree $K_G$, and diameter $D$ and propose \NAM-GRAPH which enjoys a regret bound of $\tilde{O}\left(D\sqrt{(K/N+ K_G)DT}\right)$. Finally, we empirically show that the \NAM\ and the \NAM-GRAPH algorithm perform well and outperform strategies that communicate through a central node. 

\end{abstract}

\vspace{-.25in}
\section{Introduction}

We consider a setup where $N$ agents connected over a network, interact with a multi armed bandit (MAB) environment \cite{lattimore2020bandit}. The agents aim to collaborate with other agents in the network to minimize their regret. The agents also aim to reduce the number of messages and the size of messages communicated with others. Consider a case of an e-commerce company serving its users by recommending its vast number of items through multiple servers for quick response times. It attempts to learn the user preferences using a MAB algorithm. If each of the multiple servers run their own algorithm, they waste the large amount of data which other servers collect. Or, if they communicate after every recommendation, the communication complexity becomes high within the servers themselves.

As observed from the example above, communicating after each time step is not favorable because of the increased communication cost. If $N$ agents communicate after every round to reduce the regret for $T$ time steps, their total regret is lower bounded by the regret of a super agent solving the MAB problem with $NT$ time steps. This bounds the total regret as at least $\Tilde{O}(\sqrt{NKT})$ or a per agent regret of $\Tilde{O}(\sqrt{KT/N})$. Whereas, if the $N$ agents interact with the MAB problem independently, without any information exchange with other agents, the individual regret bound is upper bounded by $\Tilde{O}(\sqrt{KT})$. We aim to find an algorithm which can obtain the regret bound of the super agent setup, $i.e.$, $\Tilde{O}(\sqrt{KT/N})$, though with limited communication between the agents.


We provide an algorithm, Limited Communication Collaboration - UCB, (\NAM), to minimize the regret. \NAM\ divides the arms among multiple agents, such that each agent only interacts with the MAB instance but plays arms only from a subset of all the arms. The algorithm proceeds in epochs which double in duration, where the agents use UCB algorithm to find the best arm in their smaller MAB problem and communicate at the end of each epoch. On receiving the messages from other agents, each agent updates its set of arms and restarts its algorithm. We prove the regret of \NAM\ is upper bounded by $\Tilde{O}\left(\sqrt{\left(K/N+N -1 \right)T}\right)$. For $N=1$, the regret of the \NAM\ algorithm reduces to the standard regret bounds of $\Tilde{O}(\sqrt{KT})$.

We also consider a general setup where the network of agents may not be completely connected and the agents may not be able to broadcast knowledge to all the other agents at once. Under such case, we propose \NAM-GRAPH algorithm that sub divides epochs into sub-epochs of equal length. The agents restart their UCB algorithm in each sub phases with the new information available from their neighbors. We show that the regret bound of this modified algorithm with divided phases changes to $\Tilde{O}\left(D\sqrt{\left(K/N+K_G\right)DT}\right)$, where $K_G$ is the maximum degree of the nodes in the graph. Also, the increased communication complexity of this algorithm is bounded by $O\left(K_G D\log T\right)$ message exchanges per node. The key novelty in both the algorithms is that the gap between the recommended arms and the optimal arm reduces with epochs.

Finally, we simulate and compare our algorithms with other communication protocols. 
We show that the algorithm behaves close to the communication strategy where the agents share the knowledge at each time step. For the \NAM-GRAPH algorithm we consider sparse graphs with more than $100$ nodes. We observe that the \NAM-GRPAH algorithm performs better than the communication strategy where the agents share local data with all their neighbors at every time step. Further, the \NAM\ and the \NAM-GRAPH algorithms also outperforms the DEMAB algorithm \cite{wang2020optimal} where agents communicate for only $O(N\log(NK))$ rounds.
\section{Related Works}

Optimal action selection problem dates back to \citep{thompson1933likelihood}, and since then many algorithm have been proposed and studied to solve the MAB problem ranging from index based policies \citep{gittins1979bandit}, Optimism in the Face of Uncertainty based UCB algorithm \citep{auer2002using,auer2010ucb,audibert2009minimax}, to Thompson Sampling algorithm \citep{agrawal2013further}. All the algorithms achieve a bound on regret $\Tilde{O}(\sqrt{KT})$ and match the lower bound of $\Omega(\sqrt{KT})$ upto logarithmic factors. 
Since then, various generalization and extensions have been proposed to solve various online learning problems using a bandit framework \citep{abbasi2011improved,li2010contextual,lattimore2018toprank,lale2019stochastic}. However, all these problems consider a single agent interacting with the environment.

Since the last decade, there has been a thrust in studying distributed agents solving an instance of MAB problems. \citet{kanade2012distributed} consider a model where $N$ agents talk to a central controller at every round. However, they considered the problem of reducing the communication cost for each agents connected in a star topology with a controller as the central node which is unlike our setup where we allow any topology, including central node/agent. \citet{hillel2013distributed} consider the problem of reducing communication cost for stochastic bandits in a setup where every agent can communicate with each other. Their work also bound the total communication rounds by $O(\log_2 T)$ using an action elimination based algorithm. However, their agents communicate the estimates of arm rewards for all the $K$ arms in each message, whereas, we bound the number of bits required in each message by $O(\log_2K)$. \citet{shahrampour2017multi} consider a setup where multiple agents collectively select an arm at a time step and observe different rewards sampled from different distribution for each agent.

Other works consider a setup where the agents talk to only one of the other nodes in a network at any given time step (gossiping style algorithm)~\citep{landgren2016distributed,martinez2019decentralized,wang2020optimal}. However, they allow their agents to communicate at every time step which is a different setup, and do not optimize a regret-communication trade-off. Further, they also send estimates of arm rewards in each message. \citet{sankararaman2019social,chawla20Gossiping} also consider a gossip style algorithms. Similar to us, these works divide the time horizon into epochs of variable length. Their strategies also divide the arms among the agents and the agents unicast the knowledge of the best arm they have using $O(\log K)$ bits in each epoch. However, because of gossip style communication protocols, an agent becomes aware of the best arm after it has already incurred $O(\frac{1}{\Delta^2})$ regret which translates to a problem independent bound of $\Tilde{O}(T^{2/3})$. We note that we use the same number of communication as these papers, while achieve better regret bound of $\Tilde{O}(T^{1/2})$. Further, we can convert the proposed broadcast based communication of our work to a unicast based strategy by sending a message to each neighbor at one timestep for $N$ timesteps.

\citet{wang2019distributed,dubey2020kernel,dubey2020differentially} consider the problem of distributed linear bandits. They considered a fully connected network for reducing the communication messages and reduce the average regret for $N$ agents. In contrast, we aim to find bounds on the regret of each of the $N$ agents for $K$-armed stochastic bandits. 

\citet{wang2019distributed} propose DEMAB algorithm for a distributed bandit setup where all the nodes communicates with a central node. The setup assumes knowledge of the time horizon to cleverly obtain a bound on number of communications messages that is independent of time. The DEMAB algorithm is based on action elimination that also proceeds in epochs with duration growing exponentially after an initial period of length $T/(NK)$ where every agent eliminates arms independently. In each epoch, the algorithm generates new estimates of arm rewards discarding the old samples. This results in high constants $O(\sqrt{2^{14}})$ in the regret term. The regret bounds of the proposed \NAM\ algorithm only exceeds the regret of DEMAB for $\log_2 T > 2^{14}/144$. Additionally, the DEMAB algorithm requires a central coordinating node, which may not always be the case. Lastly, for an unknown time horizon the number of messages increases back to $O(\log T)$ which is the same as ours. 

The proposed algorithm, \NAM, obtains $\Tilde{O}(\sqrt{(N/K)T})$ for each agent with messages of size $O(\log K)$ with a total of $O(\log T)$ messages, thus achieving the regret of $\Tilde{O}(\sqrt{T})$ 
Additionally, the proposed \NAM-GRAPH algorithm works well on sparse graphs with large number of agents with communication complexity of $O(D\log_2 T)$.
\section{Problem Formulation}
We consider a network of $N$ agents, indexed as $n\in [N] = \{1, 2, \cdots, N\}$.
Each agent $n\in[N]$ interacts with the same instance of stochastic $K$ armed bandit over $T$ time steps. An agent $n$, at time $t$, plays an arm $i_n(t)$. The expected reward of arm $i$ is $\mu_i$ for all $i\in[K]$. On playing an arm $i_n(t)\in[K]$ at time $t$, the $n$'th agent receives a reward of $r_{n,t} = \mu_{i_n(t)} + \eta_{n,t}$. We assume that $\eta_{n,t}$ is $1$-sub Gaussian at every time step $t$, i.e., for any $\lambda\in\mathbb{R}$, we have $\mathbb{E}[\exp{(\lambda\eta_{n,t})}]\leq \exp{(\lambda^2/2)},~\forall (n,t)\in[N]\times[T]$. For our analysis, we assume that $\mu_1 \geq \mu_2 \geq \cdots \geq \mu_K$. However, the ordering is unknown to the agents. We also define the gap between two arms as $\Delta_{i} \coloneqq \mu_1 - \mu_i$. For our analysis we assume $0 \leq \mu_i \leq 1\ \forall\ i\in[K]$.
For our system model, we assume that $N \ll K$ as observed in many practical setups. For example, an e-commerce website will have many more products listed than the number of servers deployed.

We assume that all the agents can communicate with each other (we later relax this assumption in Section \ref{sec:general_graph}). This implies, whenever an agent broadcasts a message, all the other $N-1$ agents receive the message. Further, we assume that each agent only communicates the index of the best arm it knows. This requires $\lceil\log(K)\rceil$ bits for every message and since there are $N-1$ other agents to send the message, the total bits required by any agent is $(N-1)\lceil\log(K)\rceil$ bits in every communication round.

An agent $n$ aims to minimize its cumulative regret over time $T$, $R_n(T)$, defined as:
\begin{align}
R_n(T) &= T\mu_1 - \mathbb{E}\left[\sum_{t=1}^T\sum_{i=1}^K\mu_{i}\bm{1}\{i_n(t) = i\}\right]
\end{align}
Note that minimizing regret $R_n(T)$ for all agents $n\in[N]$ also minimizes the total cumulative regret over the agents as well.

\section{\NAM\ Algorithm}

We design our algorithm \NAM\ on the basis of the fact that the regret of UCB algorithms \cite{auer2002using,bubeck2011pure,lattimore2020bandit} scales as $\Tilde{O}(\sqrt{KT})$. We reduce the per step regret by distributing the $K$ arms among the $N$ agents in growing in length epochs. An agent $n$ chooses to interact with a potentially smaller set of arms $\mathcal{S}_n$ where $\mathcal{S}_n = \{\left((n-1)\lceil\frac{K}{N}\rceil \mod K\right) + 1, \cdots, \left((n\lceil\frac{K}{N}\rceil -1) \mod K\right) + 1\}$. 
For the first epoch, i.e., $j=0$, each agent starts with possibly sub-optimal arms, even the worst possible arms. As the algorithm proceeds, in epoch $j\geq 1$, agents broadcast the most played arm by UCB algorithm during epoch $j$ to all the other agents. Each agent $n\in[N]$ receives $\mathcal{R}_{n,j}$, a set of arm recommendations from other $N-1$ agents. The agent now runs the UCB algorithm \cite{bubeck2011pure} over the arms in the augmented set $\mathcal{A}_{n,j} = \mathcal{S}_{n}\cup \mathcal{R}_{n,j}$. At the end of any epoch, the agent purges any old recommendations it has and starts again with the new recommendations received after an epoch. This ensures that the number of arms with any agent does not exceed $K'\coloneqq \lceil K/N\rceil + N-1$. This approach helps to bound the regret of any agent $n\in[N]$ by  $\Tilde{O}\left(\sqrt{\left(\lceil{K/N}\rceil + N-1\right)T}\right)$. 




The \NAM\ algorithm running at an agent $n\in[{N}]$ is described in Algorithm \ref{alg:collab}. The algorithm at agent $n$ receives the set of initial arms $\mathcal{S}_n$, the indices of other agents, and the total horizon $T$. The agent $n$ maintains a set $\mathcal{R}_n$ of the arms received from the remaining $[N]\setminus\{n\}$ agents. For the first epoch $\mathcal{R}_n = \emptyset$ as the agent has not heard anything from the remaining agents and the augmented set is same as the initial set of arms,  $\mathcal{A}_{n,0} = \mathcal{S}_n$. As the algorithm proceeds, it runs the UCB algorithm \citep{auer2002finite,bubeck2011pure}, described in Algorithm \ref{alg:UCB}, on the arms in the augmented set $\mathcal{A}_{n,j}$ for epoch duration $K'(K'+1)2^j$.
If at time $t$, remaining time is not sufficient to run a complete epoch of duration $T_j$, it just runs the UCB algorithm for the remaining horizon $T-t$.

\begin{algorithm}[!htb]
	\small
	\begin{algorithmic}[1]
        \STATE $t = 0, j = 0$
        \STATE $\mathcal{R}_{n,j} = \emptyset$
        \WHILE{$t < T$}
            \STATE Set augmented set $\mathcal{A}_{n,j} = \mathcal{S}_{n}\cup\mathcal{R}_{n,j}$
            \STATE $i^*$ = UCB($\mathcal{A}_{n,j}, \min\left(T-t, K'(K'+1)2^j\right)$)
            \STATE $t = t + K'(K'+1)2^j$
            \STATE $j = j + 1$
            \STATE Send $i^*$ to other $[N]\setminus\{n\}$ agents
            \STATE Receive most played arms of $[N]\setminus\{n\}$ agents as $\mathcal{R}_{n,j}$
        \ENDWHILE
	\end{algorithmic}
	\caption{{\NAM}{$\left(\mathcal{S}_n, [N]\setminus\{n\}, T\right)$}}\label{alg:collab}
\end{algorithm}

\begin{algorithm}[!htb]
	\begin{algorithmic}[1]
        \STATE $t_j = 0$
        \STATE $N_i(t_j) = 0, \hat{\mu}_i = 0, B_i = \infty,~\forall\ i \in \mathcal{A}$
        \FOR{$t_j = 1, \cdots, T_j$}
            \STATE Obtain reward $r_t$ by playing arm $i_t$, where
            $$i_{t_j} = \arg\max_{i\in \mathcal{A}}\left\{\hat{\mu}_i +\sqrt{\frac{2\log(t_j)}{N_i(t_j)}}\right\}$$
            \STATE $N_i(t_j) = N_i(t_j-1) + \bm{1}_{\{i_{t_j} = i\}}~\forall~i\in\cal A$
            \STATE Update $\hat{\mu}(i_t) = \frac{\hat{\mu}_i\times N_i(t_j-1) + r_t}{N_i(t_j)}$
        \ENDFOR
        \STATE Return $i^* = \arg\max_{i\in \mathcal{A}} N_i(T_j)$
	\end{algorithmic}
	\caption{{UCB}{$\left(\mathcal{A}, T_j\right)$}}\label{alg:UCB}
\end{algorithm}
\if 0
For \NAM\ algorithm to work correctly, the agent which obtains the best arm in the initial set \footnote{Agent $1$ as per our notation} $\mathcal{S}_n$ has to identify the best arm with high probability. Since, the agent runs UCB algorithm (Algorithm \ref{alg:UCB}) which returns the most played arm for each epoch. We now state and prove the lemma that the most played arms by the UCB algorithm is ``good'', or $\mu_{i^*}\geq \mu_1 - \Tilde{\Delta_j}$, with high probability for some $\Tilde{\Delta_j}$. 

\begin{lemma}\label{lem:UCB_max_play_arm}
For any epoch $j$, such that $T_j \geq K'(K'+1)$, instance of the UCB Algorithm \ref{alg:UCB} running at agent $1$ returns an arm $i^*$ that satisfies $\mu_{i^*}\geq \mu_i - \Tilde{\Delta}_{j}$, with probability atleast
\begin{align}
    1-K'\left(\frac{T_j}{K'}-1\right)^{-2},
\end{align}
for $\Tilde{\Delta}_{j} = \sqrt{\frac{16K'\log T}{T_j}}$.
\end{lemma}
\begin{proof}
	We first note that the augmented set at agent $1$ contains the best arm $1$ as arm index $\left((n-1)\lceil\frac{K}{N}\rceil \mod K\right) + 1\in\mathcal{S}_n$. From Algorithm \ref{alg:UCB} instance that ran at epoch $j$, $N_i(T_j)$ is the number of times arm $i\in \mathcal{A}_{1,j}$ is played in epoch $j$. 
	We now prove that the arm $i_{n,j}^* = \arg\max_{i\in\mathcal{A}_{1,j}}N_{i}(T_j)$ is at most $\Tilde{\Delta}_{j}$ far from the true optimal arm $1$.
	
	For time step $t_j$ in epoch $j$, we construct an event where arm $i$ is selected and the total plays $N_i(t_j -1)$ of arm $i$ has exceeded some number $l$ as ${\cal G}_{t_j}(i) = \left\{\{i_{t_j} = i\} \cap \{N_i(t_j-1) \geq l_i\}\right\}$ for $l_i = 1+ \frac{8\log T}{\Delta_i^2}$ and $t_j \geq K+1$ as each arm is played atleast once. Then, the Theorem 1 of \cite{auer2002finite} states that the probability of the event ${\cal G}_{t_j}(i)$ is upper bounded by $2t_j^{-3}$. Using the probability of the event ${\cal G}_{t_j}(i)$, we can bound the probability of the event that the number of plays of an arm exceeds $l_i$ by using union bound. Specifically we have:
	\begin{align}
		Pr\left(N_i(T_j) \geq l_i \right) &\leq \bigcup_{t_j = l_i}^{T_j}Pr\left({\cal G}_{t_j}(i)\right)\\
		&\leq \sum_{t_j = l_i}^{T_j} 2t_j^{-3}\\
		&< \sum_{t_j = l_i}^{\infty} 2t_j^{-3}\\
		&\leq \int_{t_j = l_i-1}^{\infty} 2t_j^{-3} = \frac{1}{(l_i-1)^2}\label{eq:prob_total_play_exceed_bound}
	\end{align}
	
	Now, for an arms $i$ such that $\Delta_i > \sqrt{\frac{8K'\log T}{T_j -K' }} \geq \sqrt{\frac{16K'\log T}{T_j}}\eqqcolon \Tilde{\Delta}_{j}$, we have,
	\begin{align}
		l_i &= 1+\frac{8\log T}{\Delta_i^2}\\
		&< 1+\frac{8\log T}{\Tilde{\Delta}_j^2}\\
		&\leq 1+\frac{T_j -K'}{K'} = \frac{T_j}{K'}. \label{eq:max_play_sub_opt}
	\end{align}
	Hence, $N_i(T_j) \leq T_j/K' - 1$ with probability at least $1-(T_j/K - 1)^2$. Further, taking a union bound over all arms, we obtain that for any arm $i$, such that $\Delta_i \geq \Tilde{\Delta}_{j}$, $N_i(T_j) \leq T_j/K'$ with probability at least $1-K'(T_j/K - 1)^2$.
	
	After bounding the number of plays of arms $i$, such that $\mu_i \leq \mu_1 - \Tilde{\Delta}_{j}$, with high probability, we show that the most played arm $i^*$ has expected reward $\mu_{i^*}\geq \mu_1 - \Tilde{\Delta}_{j}$. Let $\mathcal{B}_{1,j} = \{i\in\mathcal{A}_{1,j}|\mu_i <\mu_{i_{n,j}^*}  - \Tilde{\Delta}_j\}$ be the set of ``bad'' arms in the augmented set of agent $n$ in epoch $j$. We have:
	\begin{align}
		\max_{i\in\mathcal{A}_{1,j}\setminus\mathcal{B}_{1,j}} N_{i}(T_j) &\geq \frac{1}{|\mathcal{A}_{1,j}\setminus\mathcal{B}_{1,j}|}\sum_{i\in\mathcal{A}_{1,j}\setminus\mathcal{B}_{1,j}} N_{i}(T_j)\\
		&= \frac{1}{|\mathcal{A}_{1,j}\setminus\mathcal{B}_{1,j}|}\left(T_j - \sum_{i\in\mathcal{B}_{1,j}}N_{i}(T_j)\right)\\
		&> \frac{1}{|\mathcal{A}_{1,j}\setminus\mathcal{B}_{1,j}|}\left( T_j -  \sum_{i\in\mathcal{B}_{1,j}}\left(\frac{T_j}{K'}\right)\right)\\
		&= \frac{1}{|\mathcal{A}_{1,j}\setminus\mathcal{B}_{1,j}|}\left(K'\frac{T_j}{K'} -  |\mathcal{B}_{1,j}|\left(\frac{T_j}{K'}\right)\right)\\
		&= \frac{1}{|\mathcal{A}_{1,j}\setminus\mathcal{B}_{1,j}|}\left(K'-|\mathcal{B}_{1,j}|\right)\frac{T_j}{K}\\
		&= \frac{1}{|\mathcal{A}_{1,j}\setminus\mathcal{B}_{1,j}|}|\mathcal{A}_{1,j}\setminus\mathcal{B}_{1,j}|\frac{T_j}{K} = \frac{T_j}{K}
	\end{align}
	
	This proves that the most played arm in $\mathcal{A}_{1,j}$, $i_{n,j}^*$, is at most $\Tilde{\Delta}_j$ far from the optimal arm $1$.
\end{proof}

After showing that the agent $1$ returns a good arm after each epoch, we now show that the regret of all the other agents is bounded in the next Section.
\fi 

\section{Main Result}
We now state the main result for bounding the regret and number of communications for the proposed \NAM\ algorithm.

\begin{theorem}\label{thm:main_threorem}
The regret of any agent $n$ following \NAM\ algorithm is bounded by
\begin{align}
    R_n(T) \leq O\left(\sqrt{K'T}\log(T)\right),
\end{align}
where $K' = \lceil K/N\rceil + N-1$.
\end{theorem}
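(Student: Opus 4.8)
The plan is to decompose the regret of agent $n$ epoch by epoch and show that, in every epoch of length $T_j=K'(K'+1)2^j$, the expected regret is $O(\sqrt{K'T_j\log T})$; summing the resulting geometric series then yields the claim. The two sources of per-epoch regret are (i) the suboptimality of the best arm actually available to agent $n$ in its augmented set $\mathcal{A}_{n,j}$ and (ii) the within-epoch exploration cost of running UCB over the at-most-$K'$ arms in that set. The crux of the argument is that the doubling schedule makes these two sources balance.

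First I would establish a recommendation-quality lemma: the arm that agent $1$ broadcasts at the end of epoch $j$ satisfies $\mu_{i^*}\geq\mu_1-\tilde{\Delta}_j$ with high probability, where $\tilde{\Delta}_j=\sqrt{16K'\log T/T_j}$. This uses that agent $1$'s set $\mathcal{S}_1$ contains the optimal arm $1$, together with a standard UCB concentration bound (Theorem~1 of \citet{auer2002finite}): the probability that an arm of gap $\Delta_i$ is selected once its play count already exceeds $1+8\log T/\Delta_i^2$ decays like $2t_j^{-3}$ per step, so a union bound shows every arm with $\Delta_i>\tilde{\Delta}_j$ is played at most $T_j/K'$ times. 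An averaging/pigeonhole argument over the remaining near-optimal arms then forces the most-played arm (the arm UCB returns) to lie within $\tilde{\Delta}_j$ of the optimum. Because this recommendation is broadcast to all agents, in epoch $j+1$ every agent's augmented set contains an arm within $\tilde{\Delta}_j$ of optimal.

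Next I would bound the regret of a generic agent $n$ within epoch $j\geq1$ on the good event that the epoch-$(j-1)$ recommendation is near-optimal. Writing the per-step gap as $(\mu_1-\mu_{b})+(\mu_b-\mu_{i_n(t)})$, where $b$ is the best arm available to $n$, the first part contributes at most $T_j\tilde{\Delta}_{j-1}$ and the second is the gap-independent UCB regret $O(\sqrt{K'T_j\log T})$ measured relative to $b$. Since the doubling schedule gives $T_{j-1}=T_j/2$, one has $T_j\tilde{\Delta}_{j-1}=\sqrt{32K'T_j\log T}$, so both contributions are $O(\sqrt{K'T_j\log T})$. On the complementary (failure) event the regret is trivially at most $T_j$, but the failure probability $K'(T_{j-1}/K'-1)^{-2}$ is small enough that $\sum_j \Pr(\mathrm{fail}_j)\,T_j=O(K')$, a lower-order term; the initial epoch $j=0$, where no recommendation exists, likewise contributes only $T_0=O(K'^2)$.

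Finally I would sum over the $O(\log T)$ epochs. Because $T_j$ doubles while $\sum_j T_j\le T$, the geometric sum satisfies $\sum_j\sqrt{T_j}=O(\sqrt{T})$, giving $R_n(T)=O(\sqrt{K'T\log T})+O(K')$, and since $\sqrt{\log T}\le\log T$ this is bounded by $O(\sqrt{K'T}\log T)$, the stated result. The main obstacle is the recommendation-quality lemma: one must quantify how far the \emph{most-played} arm (not merely the arm with the highest empirical mean) can be from optimal, which is exactly where the concentration bound on over-played suboptimal arms and the averaging argument are needed, and where the choice of $\tilde{\Delta}_j$ matched to $T_j$ originates. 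The secondary subtlety is verifying that the failure-event and first-epoch contributions are genuinely lower order, so that the clean $\sqrt{K'T}$ scaling survives the summation.
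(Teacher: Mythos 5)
Your proposal is correct and follows essentially the same route as the paper: the same recommendation-quality lemma for agent $1$'s most-played arm (via the concentration bound of Theorem~1 of \citet{auer2002finite} plus the pigeonhole argument), the same per-epoch decomposition into the broadcast arm's gap $\tilde{\Delta}_{j-1}T_j$ plus within-epoch UCB regret conditioned on the good/failure events, and the same summation over $O(\log T)$ doubling epochs. The only (harmless) difference is in the final step, where you sum $\sum_j \sqrt{T_j}$ as a geometric series to get $O(\sqrt{T})$, while the paper applies Cauchy--Schwarz to get $O(\sqrt{T\log T})$; your version is in fact marginally tighter and still implies the stated $O\left(\sqrt{K'T}\log T\right)$ bound.
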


To prove Theorem \ref{thm:main_threorem}, we first state the necessary lemmas required for the construction of the proof. Note that, the \NAM\ algorithm bounds regret when agent $1$ recommends an arm $i^*$ which is ``close'' to the best arm ($i=1$) from its augmented set $\mathcal{A}_{1,j}$ at every epoch, and then, in the following epoch, every other agent $n$ minimizes the regret with respect to the their augmented sets $\mathcal{A}_{n,j+1}$ which now contain the arm $i^*$.

Since, the agent runs UCB algorithm (Algorithm \ref{alg:UCB}) which returns the most played arm for each epoch. We now state and prove the lemma that the most played arms by the UCB algorithm is ``good'', or $\mu_{i^*}\geq \mu_1 - \Tilde{\Delta_j}$, with high probability for some $\Tilde{\Delta_j}$.

After showing that the agent $1$ returns a good arm after each epoch, we now show that the regret of all the other agents is bounded in the following epoch $j+1$. Lemma \ref{lem:UCB_regret_bound} bounds the regret of an agent $n$ running UCB Algorithm \ref{alg:UCB} during an epoch $j$. We then sum over all the epochs to obtain the total regret of the algorithm. We focus our analysis on an agent $n$. The analysis of the remaining agents follows identically.


\begin{lemma}[UCB regret bound]\label{lem:UCB_regret_bound}
The regret of any agent $n$ running UCB algorithm described in Algorithm \ref{alg:UCB} for an epoch $j\geq 2$ with $T_j$ time steps is upper bounded by
\begin{align}
    R(T_j) &\leq 6\sqrt{2K'T_j\log T} + \frac{16K'^3}{T_j} + 2K'\label{eq:epoch_regret}
\end{align}
\end{lemma}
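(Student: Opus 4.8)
The plan is to bound the regret accrued by agent $n$ during epoch $j$ by combining a worst-case (gap-free) UCB regret bound with the near-optimality of the arm that agent $n$ has already received from agent $1$. Because $j\ge 2$, the augmented set $\mathcal{A}_{n,j}$ fed to UCB in epoch $j$ contains the arm recommended at the end of epoch $j-1$, whose length is $T_{j-1}=T_j/2$; by the most-played-arm guarantee established earlier (itself a consequence of Theorem~1 of \cite{auer2002finite}), this arm $i^\ast$ satisfies $\mu_{i^\ast}\ge\mu_1-\tilde\Delta_{j-1}$ with $\tilde\Delta_{j-1}=\sqrt{16K'\log T/T_{j-1}}=\sqrt{32K'\log T/T_j}$, on a high-probability event $\mathcal{E}$ whose complement has probability at most $K'(T_{j-1}/K'-1)^{-2}$. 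I would carry out the whole argument on $\mathcal{E}$ and control $\mathcal{E}^c$ by the trivial per-step regret.

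Writing $\mu^\ast=\max_{i\in\mathcal{A}_{n,j}}\mu_i\ge\mu_1-\tilde\Delta_{j-1}$ and $\tilde\Delta_i=\mu^\ast-\mu_i$, I would use the exact decomposition
\begin{align}
\sum_{i\in\mathcal{A}_{n,j}}(\mu_1-\mu_i)N_i(T_j)=(\mu_1-\mu^\ast)\,T_j+\sum_{i\in\mathcal{A}_{n,j}}\tilde\Delta_i\,N_i(T_j),
\end{align}
so that the first term is at most $\tilde\Delta_{j-1}T_j$ and the second is exactly the regret UCB incurs against the best arm actually in its set. For the second term I would reuse the play-count control behind the most-played-arm lemma: the concentration of \cite{auer2002finite} gives $N_i(T_j)\le 1+8\log T/\tilde\Delta_i^2$ up to a failure probability, so I would split $\mathcal{A}_{n,j}$ at the threshold $\tilde\Delta_j=\sqrt{16K'\log T/T_j}$. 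Arms with $\tilde\Delta_i\le\tilde\Delta_j$ contribute at most $\tilde\Delta_j T_j$ in aggregate, while each of the at most $K'$ arms with $\tilde\Delta_i>\tilde\Delta_j$ contributes at most $\tilde\Delta_i+8\log T/\tilde\Delta_j$; substituting $\tilde\Delta_j$ turns every surviving factor into a multiple of $\sqrt{K'T_j\log T}$ and sweeps the ``$+1$'' and ``$+\tilde\Delta_i$'' remainders into an additive $O(K')$.

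I would then assemble the three contributions. The doubling schedule gives the clean relation $\tilde\Delta_{j-1}=\sqrt2\,\tilde\Delta_j$, which is what lets the recommendation-gap term $\tilde\Delta_{j-1}T_j$ and the two UCB pieces be re-expressed as common multiples of $\sqrt{K'T_j\log T}$ and bundled into the leading $6\sqrt{2K'T_j\log T}$; the stray constants collect into $2K'$. The event $\mathcal{E}^c$ is handled by multiplying its probability $K'(T_{j-1}/K'-1)^{-2}$ by the crude epoch regret $T_j$ and using $T_j\ge 4K'/3$ (which holds for $j\ge 2$) to bound the product by $16K'^3/T_j$, yielding exactly the middle term of the claim.

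I expect the main obstacle to be keeping the leading constant as small as $6\sqrt2$. The difficulty is structural: the regret is measured against the global optimum $\mu_1$, but UCB only controls pulls relative to the in-set optimum $\mu^\ast$, and a careless sum of the recommendation gap (which alone is $\sqrt{32K'T_j\log T}=4\sqrt2\,\sqrt{K'T_j\log T}$) with a generic worst-case UCB bound would already overshoot. Getting the stated constant therefore hinges on choosing the split threshold to be precisely $\tilde\Delta_j$ and on exploiting $\tilde\Delta_{j-1}=\sqrt2\tilde\Delta_j$ so the pieces share a common scale. A secondary subtlety is that the bound $N_i(T_j)\le 1+8\log T/\tilde\Delta_i^2$ holds only with high probability, so the expectation must absorb the low-probability branch through the $T_j\cdot\Pr(\mathcal{E}^c)$ estimate --- which is exactly what produces the polynomially decaying $16K'^3/T_j$ correction rather than a clean constant.
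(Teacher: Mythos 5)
Your overall architecture is exactly the paper's: the same exact decomposition $\sum_i(\mu_1-\mu_i)N_i(T_j)=(\mu_1-\mu^\ast)T_j+\sum_i(\mu^\ast-\mu_i)N_i(T_j)$, the same high-probability event for agent $1$'s recommendation (whose failure probability $K'(T_{j-1}/K'-1)^{-2}$ is multiplied by the trivial regret $T_j$ to yield the $16K'^3/T_j$ term), and the same use of Theorem 1 of \cite{auer2002finite} to control pull counts of suboptimal arms. The one genuine gap is the step you yourself flagged as the crux: the pieces do not assemble into $6\sqrt{2K'T_j\log T}$. With your split at threshold $\tilde\Delta_j=\sqrt{16K'\log T/T_j}$, the small-gap arms contribute $\tilde\Delta_jT_j=4\sqrt{K'T_j\log T}$, the large-gap arms contribute $K'\cdot 8\log T/\tilde\Delta_j=2\sqrt{K'T_j\log T}$, and the recommendation-gap term contributes $\tilde\Delta_{j-1}T_j=4\sqrt{2}\sqrt{K'T_j\log T}$, for a total of $(6+4\sqrt{2})\sqrt{K'T_j\log T}\approx 11.66\sqrt{K'T_j\log T}$, which exceeds the stated $6\sqrt{2}\sqrt{K'T_j\log T}\approx 8.49\sqrt{K'T_j\log T}$. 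Even the optimal threshold $\sqrt{8K'\log T/T_j}$, which balances the two split pieces at $2\sqrt{2}$ each, only brings you to $8\sqrt{2}\sqrt{K'T_j\log T}$. The relation $\tilde\Delta_{j-1}=\sqrt{2}\,\tilde\Delta_j$ does not help; it merely rescales units. So, as planned, your argument proves the lemma with a strictly worse leading constant, not the displayed one.

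You should know this gap is arguably inherited from the paper rather than created by you. The paper avoids any split by bounding the in-set regret via $\sum_i 8\log T/\Delta_{i^\ast,i}\leq\sqrt{8K'T_j\log T}$, which implicitly assumes every in-set gap is at least $\sqrt{8K'\log T/T_j}$; that step is unjustified for arms with small gaps and is vacuous (infinite) for $i=i^\ast$ itself, which is precisely what forces the two-regime split you performed. Your treatment is the rigorous one; its price is that $6\sqrt{2}$ is not reachable by this route, and an honest version of the lemma would carry a leading term such as $(6+4\sqrt{2})\sqrt{K'T_j\log T}$ (or $12\sqrt{2K'T_j\log T}$, say), which is all the final theorem needs since it is stated up to constants. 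One further nit: the condition needed to convert the failure-event contribution into $16K'^3/T_j$ is $T_j\geq 4K'$, not $T_j\geq 4K'/3$; it does hold for $j\geq 2$ since $T_j=K'(K'+1)2^j\geq 4K'(K'+1)$.
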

\begin{proof}
We first consider the case of an agent $n\neq 1$. The agent receives recommendations from all the other $N-1$ agents including the agent $1$ and hence contains the arm $i^*$ recommended by the agent $1$. 

To analyze the regret, we first create some events that will help in analysis. The first event denotes the case where the agent $1$, after the end of epoch $j-1$, recommends arm $i^*$ such that $\mu_{i^*} \geq \mu_1-\Tilde{\Delta}_{j-1}$. We denote this event as $\Tilde{\cal G}_1$. Further note that $N_{i}(T_j)$ is the number of times agent plays arm $i\in \mathcal{A}_{n,j}$ in epoch $j$. We note that when the event $\Tilde{\mathcal{G}}_1$ occurs $\Delta_{i^*}\leq \Tilde{\Delta}_{j-1}$. We assume that $i^*$ satisfies $\mu_{i^*} = \max_{i\in\mathcal{A}_{n,j}}\mu_i$. In case the assumption is not valid, we redefine $i^*$ as $i^* = \arg\max_{i\in\mathcal{A}_{n,j}}\mu_i$, and we still have $\mu_1-\mu_{i^*}\leq \Tilde{\Delta}_j$. Also, for the simplicity of notation, we define $\Delta_{i^*,i} = \mu_{i^*} - \mu_i$.  Then, using the regret decomposition lemma (Lemma 4.5) from \cite{lattimore2020bandit}, the regret of the UCB algorithm for epoch $j$ is upper bounded as:
\begin{align}
	R(T_j) &= \sum_{i\in\mathcal{A}_{n,j}}\mathbb{E}\left[\Delta_iN_i(T_j)\right]\\
	&= \sum_{i\in\mathcal{A}_{n,j}}\mathbb{E}\left[(\mu_1 - \mu_i)N_i(T_j)\right]\\
	&= \sum_{i\in\mathcal{A}_{n,j}}\mathbb{E}\left[(\mu_1 -\mu_{i^*} + \mu_{i^*} - \mu_i)N_i(T_j)\right]\\
	&= \sum_{i\in\mathcal{A}_{n,j}}\mathbb{E}\left[(\Delta_{i^*} + \Delta_{i^*,i})N_i(T_j)\right]\\
	&= \sum_{i\in\mathcal{A}_{n,j}}\mathbb{E}\left[\Delta_{i^*}N_i(T_j)\right] +\sum_{i\in\mathcal{A}_{n,j}}\mathbb{E}\left[ (\Delta_{i^*,i})N_i(T_j)\right]\\
	&= \sum_{i\in\mathcal{A}_{n,j}}\mathbb{E}\left[\Delta_{i^*}N_i(T_j)|\Tilde{\mathcal{G}}_1\right]Pr(\Tilde{\mathcal{G}}_1) + \sum_{i\in\mathcal{A}_{n,j}}\mathbb{E}\left[\Delta_{i^*}N_i(T_j)|\Tilde{\mathcal{G}}_1^c\right]Pr(\Tilde{\mathcal{G}}_1^c)+\sum_{i\in\mathcal{A}_{n,j}}\mathbb{E}\left[ (\Delta_{i^*,i})N_i(T_j)\right]\\
	&\leq \sum_{i\in\mathcal{A}_{n,j}}\Tilde{\Delta}_{j-1}\mathbb{E}\left[N_i(T_j)|\Tilde{\mathcal{G}}_1\right]Pr(\Tilde{\mathcal{G}}_1) + \sum_{i\in\mathcal{A}_{n,j}}\mathbb{E}\left[N_i(T_j)|\Tilde{\mathcal{G}}_1^c\right]Pr(\Tilde{\mathcal{G}}_1^c)+\sum_{i\in\mathcal{A}_{n,j}}\mathbb{E}\left[ (\Delta_{i^*,i})N_i(T_j)\right]\\
	&\leq \Tilde{\Delta}_{j-1}\mathbb{E}\left[\sum_{i\in\mathcal{A}_{n,j}}N_i(T_j)|\Tilde{\mathcal{G}}_1\right] + Pr(\Tilde{\mathcal{G}}_1^c)\mathbb{E}\left[\sum_{i\in\mathcal{A}_{n,j}}N_i(T_j)|\Tilde{\mathcal{G}}_1^c\right]+\sum_{i\in\mathcal{A}_{n,j}}\mathbb{E}\left[ (\Delta_{i^*,i})N_i(T_j)\right]\\
	&\leq \Tilde{\Delta}_{j-1}T_j + K'\left(\frac{K'}{T_{j-1}-K'}\right)^2T_j+\sum_{i\in\mathcal{A}_{n,j}}\mathbb{E}\left[ (\Delta_{i^*,i})N_i(T_j)\right]\\
	&\leq 4\sqrt{\frac{K'\log T}{T_{j-1}}}T_j + K'\left(\frac{2K'}{T_{j-1}}\right)^2T_j+\sum_{i\in\mathcal{A}_{n,j}}\mathbb{E}\left[ (\Delta_{i^*,i})N_i(T_j)\right]\\
	&\leq 4\sqrt{2K'T_j\log T} + \frac{16K'^3}{T_{j}}+\sum_{i\in\mathcal{A}_{n,j}}\mathbb{E}\left[ (\Delta_{i*,i})N_i(T_j)\right]\label{eq:regret_bound_1}
\end{align}

We now focus on the last term. We define event where the UCB algorithm plays arm $i$ after the number of plays of an arm $i$ is has crossed $l_i$, or
\begin{align}
	{\cal G}_{n,i}(t_j) = \left\{\{i_t = i\}\cap\{N_i(t_j - 1) \geq l_i\}\right\}\text{,   where } l_i = \frac{1}{\Delta_{i^*,i}},
\end{align}
Again, similar to Lemma \ref{lem:UCB_max_play_arm}, we use the Theorem 1 of \cite{auer2002finite} to upper bound the probability of the event ${\cal G}_{t_j}(i)$ by $2t_j^{-3}$. Then we can bound the last term in Equation \ref{eq:regret_bound_1} as:
\begin{align}
	\sum_{i\in\mathcal{A}_{n,j}}\mathbb{E}\left[ (\Delta_{i*,i})N_i(T_j)\right] &\leq \sum_{i\in\mathcal{A}_{n,j}}\Delta_{i*,i}l_i + \sum_{i\in\mathcal{A}_{n,j}}\sum_{t_j=l_i}^{T_j}Pr\left({\cal G}_{n,i}(t_j)\right)\\
	&\leq \sum_{i\in\mathcal{A}_{n.j}}\Delta_{i^*,i}\left(1+\frac{8\log T}{\Delta_{i^*,i}^2}\right) + \sum_{i\in\mathcal{A}_{n.j}}\sum_{t_j = 1}^{T_j}t_j^{-2}\\
	&\leq \sum_{i\in\mathcal{A}_{n.j}}\frac{8\log T}{\Delta_{i^*,i}} + K' + \frac{K'\pi}{6}\\
	&\leq \sqrt{8K'T_j\log T}+ K' + \frac{K'\pi}{6}
\end{align}
Replacing the value in Equation \ref{eq:regret_bound_1}, we get the required result for $n\neq 1$.

Further, note that for $n=1$, the true optimal arm $1$ is always present in $\mathcal{A}_{1,j}$ for all $j\geq 1$. 
\end{proof}

We are now ready to prove Theorem \ref{thm:main_threorem}. We first note that for epoch $j = 0$, not agents have yet communicated, and hence the regret of any agent is trivially bounded by $T_0 = K'(K'+1)$. For the later epochs, we sum over the regret incurred in each epoch using Lemma \ref{lem:UCB_regret_bound}. To do so, we first bound the total number of epochs. Let the total number of epochs be $J$, then noting that the total number of time steps is $T$, we get:
\begin{align*}
    &T \leq \sum_{j=0}^{J-1}K'(K'+1)2^j < 2T\label{eq:num_epochs_lower_bound}\\
    \implies &2^{J} -1 < \frac{2T}{K'(K'+1)}\\
    \implies &J < \log_2\left(\frac{T}{K'(K'+1)} + 1\right)\\
    \implies &J = \lfloor \log_2\left(\frac{T}{K'(K'+1)}+1\right)\rfloor
\end{align*}

After bounding the regret in each epoch $R(T_j)$ and bounding the total number of epochs, 
we can bound the total regret as,
\begin{align}
    R_n(T) &= \sum_{j=0}^{J-1}R(T_j)\\
           &\leq 6\sum_{j = 1}^{J-1}\sqrt{2K'T_j\log T} + \sum_{j = 1}^{J-1}\frac{16K'^3}{T_j} + 2K'\log_2(2T+1) + K'(K'+1)\\
     &\leq 6\sqrt{J\sum_{j = 1}^{J-1}2K'T_j\log T} + 16K' + 2K'\log_2(2T+1) + K'(K'+1)\label{eq:Cauchy_Schwarz}\\
           &\leq 6\sqrt{2K'\log_2(2T+1)\log T(2T)} + 16K'+ 2K'\log_2(2T+1) + K'(K'+1)\\
           &\leq 12\sqrt{K'T(\log_2(2T+1))\log T} + 16K' + 2K'\log_2(2T+1) + K'(K'+1),
\end{align}
where Equation \eqref{eq:Cauchy_Schwarz} follows from the Cauchy Schwarz inequality.

\begin{theorem}
For \NAM\ algorithm, total number of bits exchanged by an agent is bounded by $O\left(N\log(K)\log(T)\right)$.
\end{theorem}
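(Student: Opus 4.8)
The plan is to combine two quantities that have already been established in the excerpt: the number of communication rounds performed by an agent, and the number of bits transmitted in a single round. Since communication in \NAM\ occurs exactly once per epoch, the total communication cost is simply the product of these two quantities, so the proof reduces to a direct counting argument with no new estimates required.

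First I would recall the per-round cost. As stated in the problem formulation, in each communication round an agent broadcasts only the index of the best arm it currently knows; this index lies in $[K]$ and therefore requires $\lceil\log(K)\rceil$ bits to encode. Because the broadcast must reach each of the other $N-1$ agents, a single communication round costs an agent $(N-1)\lceil\log(K)\rceil$ bits, yielding the $O(N\log K)$ per-round factor.

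Next I would recall the count of communication rounds. The agent communicates precisely once at the end of each completed epoch (line~8 of Algorithm~\ref{alg:collab}). The total number of epochs $J$ was already bounded in the proof of Theorem~\ref{thm:main_threorem} by $J=\lfloor\log_2(T/(K'(K'+1))+1)\rfloor=O(\log T)$, a consequence of the epoch durations growing geometrically as $K'(K'+1)2^j$ while summing to at most $T$. Multiplying the two factors gives a total of at most $J\cdot(N-1)\lceil\log(K)\rceil = O(\log T)\cdot O(N\log K) = O(N\log(K)\log(T))$ bits exchanged by any agent, which is the claimed bound.

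There is essentially no analytic obstacle here, since the argument is a direct counting computation rather than a probabilistic or optimization estimate. The only point worth stating explicitly is that the geometric growth of the epoch lengths is exactly what keeps the number of communication rounds logarithmic in $T$ rather than linear; this is the same structural ingredient that controls the regret in Theorem~\ref{thm:main_threorem}, so the communication bound comes essentially for free once the epoch schedule is fixed.
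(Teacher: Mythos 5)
Your proof is correct and follows essentially the same counting argument as the paper: $O(N\log K)$ bits per communication round (one arm index broadcast to $N-1$ agents) times $O(\log T)$ rounds, one per epoch under the geometric schedule $T_j = K'(K'+1)2^j$. If anything, your version is slightly more careful than the paper's, which asserts ``there are $\log_2(T)$ epochs'' without citing the epoch-count bound and ends with an apparent typo ($O(K\log(K)\log(T))$ rather than the claimed $O(N\log(K)\log(T))$), whereas you explicitly invoke the bound $J=\lfloor\log_2(T/(K'(K'+1))+1)\rfloor$ established in the proof of Theorem~\ref{thm:main_threorem}.
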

\begin{proof}
An agent sends or receives only arm index, which requires $\log_2(K)$ bits. In each epoch, the agents communicates with $N-1$ agents and sends and receives $2(N-1)\log_2(K)$ bits. Finally, there are $\log_2(T)$ epochs. This bounds the total number of bits as $O\left(K\log(K)\log(T)\right)$.
\end{proof}

We note that the algorithm proposed by \cite{sankararaman2019social} also divides the time horizon into epochs with $K$ arms divided among $N$ agents. However, they consider the first few epochs to be of fixed length where agents only explore to find the best arm within themselves. Our algorithm runs UCB from the very first epoch. Also, the length of the first epoch is $o(1)$ in \NAM\ algorithm which limits the regret. These novel changes allow for a significantly improved regret bound as compared to the state of the art with limited communications.

\section{Extension to general network structures}\label{sec:general_graph}
So far we assumed that all the nodes are connected to each and every other node. However, this might not always be true. We now assume a general structure where a graph $G = (V, E)$ that has the different agents as vertices and the connections as edges represents the network structure. We assume that the graph representing the network is sparsely connected with a small diameter and degree for example Erd\H{o}s-R\'{e}nyi graphs \cite{chung2001diameter}. We assume that the maximum degree of $G$ is $K_G$ and the diameter of $G$ is $D$. 

For this setup, we assume that an agent or node can communicate with only its neighbors. Under this assumption, it may take multiple epochs for the knowledge of the best arm to reach an agent that may not have the best arm to begin with. Further, the number of epochs where an agent does not hear from the agent that has the best arm is bounded by the diameter $D$. Also, the maximum size of $\mathcal{G}_n$ is now upper bounded by $\lceil\frac{K}{N}\rceil + K_G$ instead of $\lceil\frac{K}{N}\rceil + N-1$. 

We first start with a direct extension of the result in Theorem \ref{thm:main_threorem}, and by understanding the issues in the direct extension, will propose an algorithm to improve the results for general networks. The following result gives a corollary for Theorem \ref{thm:main_threorem} for general graphs. 
\begin{corollary}\label{col:exp_regret_LCC_UCB}
For graph $G = (V, E)$ with agents as nodes $V$, \NAM\ algorithm results in a regret bound of:
\begin{align}
&R_n(T) \leq \Tilde{\mathcal{O}}\left(2^{D}K'^2 +\sqrt{D(2^{D})K'T}\right)
\end{align}
where $D$ is the diameter of the graph $G$, $K' = \left(\Big\lceil \frac{K}{N}\Big\rceil + K_G\right)$ and $K_G$ is the maximum degree of any node in the graph $G$.
\end{corollary}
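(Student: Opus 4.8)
The plan is to adapt the two-part argument behind Theorem~\ref{thm:main_threorem} --- first the ``good recommendation'' guarantee (Lemma~\ref{lem:UCB_max_play_arm}) and then the per-epoch regret bound (Lemma~\ref{lem:UCB_regret_bound}) --- to account for the fact that in a graph the best arm no longer reaches every agent in a single epoch. First I would formalize the propagation: fix the agent $n$ and let $d\le D$ be its graph distance to the agent that owns the globally optimal arm (call it agent $1$). Because each agent broadcasts its most-played arm to its neighbors at the end of every epoch, a high-quality arm advances by exactly one hop per epoch, so the augmented set $\mathcal{A}_{n,j}$ used in epoch $j\ge d$ contains an arm that can be traced back along a shortest path $1=v_0,v_1,\dots,v_d=n$ to the recommendation agent $1$ made in epoch $j-d$. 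Along this chain I would apply the general-agent version of Lemma~\ref{lem:UCB_max_play_arm} hop by hop: the best in-set arm at $v_0$ has gap $0$, and each hop $v_{k-1}\to v_k$ adds at most the $\tilde{\Delta}$ of the epoch in which $v_k$ replays it, so under a union bound over the (at most) $D$ high-probability UCB success events the best arm available to $n$ at the start of epoch $j$ has gap at most $\sum_{m=j-D}^{j-1}\tilde{\Delta}_m$.

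Next I would re-run the decomposition of Lemma~\ref{lem:UCB_regret_bound} with this accumulated offset in place of the single $\tilde{\Delta}_{j-1}$ term. The only contribution that changes materially is the ``good-arm offset'' term, which becomes $\big(\sum_{m=j-D}^{j-1}\tilde{\Delta}_m\big)T_j$; the remaining in-set UCB term is still $O(\sqrt{K'T_j\log T})$ with $K'=\lceil K/N\rceil+K_G$, since the augmented set now holds at most $\lceil K/N\rceil$ own arms plus one recommendation per neighbor. To bound the offset I would apply Cauchy--Schwarz over the $D$ hops, $\sum_{m=j-D}^{j-1}\tilde{\Delta}_m\le\sqrt{D\sum_{m}\tilde{\Delta}_m^2}$, and then substitute $\tilde{\Delta}_m^2=16K'\log T/T_m$ together with the doubling relation $T_{j-D}=T_j/2^{D}$. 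The doubling is what converts the $D$-epoch delay into the exponential factor $2^{D}$ (the oldest and largest gap $\tilde{\Delta}_{j-D}$ is $2^{D/2}$ times $\tilde{\Delta}_j$), while the Cauchy--Schwarz step over the hops supplies the extra $\sqrt{D}$, yielding a per-epoch offset of order $\sqrt{D\,2^{D}K'T_j\log T}$ for $j\ge D$.

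Finally I would sum over epochs as in the proof of Theorem~\ref{thm:main_threorem}, but split the horizon at $j=D$. For the first $D$ epochs no propagation guarantee is available, so I bound their regret trivially by their total length $\sum_{j=0}^{D-1}K'(K'+1)2^{j}=K'(K'+1)(2^{D}-1)=O(2^{D}K'^2)$, which produces the additive $2^{D}K'^2$ term. For $j\ge D$ I sum the per-epoch bounds; since $T_j$ doubles, the $\sqrt{T_j}$ terms form a geometric series dominated by $\sqrt{T}$, so $\sum_{j\ge D}\sqrt{D\,2^{D}K'T_j\log T}=\tilde{O}\big(\sqrt{D\,2^{D}K'T}\big)$, giving the second term. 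I expect the main obstacle to be the propagation step itself: I must argue that once a near-optimal arm enters an agent's set that agent actually re-broadcasts an arm of comparable quality --- so the ``wave'' keeps advancing rather than stalling --- and I must control both the per-hop quality loss and the union bound over the chain of $\le D$ UCB success events so that the accumulated gap is genuinely $\sum_{m}\tilde{\Delta}_m$ and not something larger. This also makes transparent why the bound is unsatisfactory, namely the $2^{D}$ blow-up inherited from the doubling schedule, and thereby motivates the sub-epoch construction of \NAM-GRAPH.
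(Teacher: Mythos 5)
Your proposal follows the paper's own proof essentially step for step: the same trivial bound $\sum_{j=0}^{D-1}T_j = K'(K'+1)(2^D-1) = O(2^D K'^2)$ on the first $D$ epochs, the same accumulated per-hop gap $\sum_{m=j-D}^{j-1}\tilde{\Delta}_m$ controlled by Cauchy--Schwarz over the $D$ hops with the doubling relation $T_{j-D}=T_j/2^D$ supplying the $2^D$ factor, the same reduced $K'=\lceil K/N\rceil + K_G$, and the same geometric summation over later epochs. Your write-up is in fact somewhat more explicit than the paper's (the union bound over the chain of UCB success events and the argument that the ``wave'' of good recommendations keeps advancing are left implicit there), but the route is the same.
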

\begin{proof}An agent $n\neq 1$ receives arm recommendations only from its neighboring nodes which results in reduction of $K'$ from $\lceil K/N\rceil + N$ to $\lceil K/N\rceil + K_G$. However, this also implies that the $n\neq 1$ does not obtains information about a good arm from the agent $1$ directly. Note that applying Lemma \ref{lem:UCB_max_play_arm} on UCB algorithm ran by agent $n\neq 1$ suggests that the agent recommends an arm $i_n^*$ such that $\mu_{i_n^*} \geq \mu_{i^*}-\Tilde{\Delta}_j$ where $i^* = \arg\max_{i\in\mathcal{A}_{n,j}}\mu_i$. This implies that the agent (or node) located farthest from the agent $1$ receives knowledge about a good arm, \textbf{(1)} only after $D$ epochs for the very first time, and \textbf{(2)} the best arm in the received $i^*=\arg\max_{i\in\mathcal{R}_{n,j}}\mu_{i}$ set $i^*$ satisfies $\mu_i^* \geq \mu_1 - \sum_{j=1}^{D}\Tilde{\Delta}_{j-1}$.
	
	This results in an additional constant regret during the first $D$ epochs as:
	\begin{align}
		\sum_{j=0}^{D-1}T_j = \sum_{j=0}^{D-1}(K'+1)K'2^j = (K'+1)K'(2^D - 1)
	\end{align}
	
	Further, the gap incurred from receiving a bad recommendation in each epoch scales as:
	\begin{align}
		(\mu_1 - \mu_{i^*})T_j &\leq \sum_{j'=1}^{D}\Tilde{\Delta}_{j'-1}T_j = \sum_{j'=1}^D4\sqrt{\frac{K'\log T}{T_{j'-1}}}T_j\\
		&= \sum_{j'=1}^D4\sqrt{K'2^{j'}T_j\log T}\\
		&= 4\sqrt{D\sum_{j'=1}^DK'2^{j'}T_j\log T}\\
		&= 4\sqrt{DK'2(2^{D}-1)T_j\log T}
	\end{align}
\end{proof}
\begin{remark}
	Note that for $D=1$ and $K_G = N-1$, or the case for a completely connected graph, the result of Theorem \ref{thm:main_threorem} is obtained.
\end{remark}
To avoid the exponential blow-up of $2^D$ in the regret, we first consider a strategy where an agent forwards the messages from one neighbor to all the other neighbor. However, this increases the message size from $O(K_G\log_2 K)$ bits to $O(N\log_2 K)$ bits. Further, additional complexity is added to reduce repeated propagation of messages. In order to avoid the potential exponential increase in regret or increase in the message size and the communication complexity, we propose a modification of the \NAM\  algorithm as \NAM-GRAPH algorithm. The proposed \NAM-GRAPH algorithm is described in Algorithm \ref{alg:graph_collab}.

\begin{algorithm}[!htb]
	\small
	\begin{algorithmic}[1]
        \STATE $t = 0, j = 0$
        \STATE $\mathcal{R}_{n,1,0} = \emptyset$
        \FOR{$t < T$}
            \STATE $d = 1$
            \FOR {$d \le D$}
            \STATE Set augmented set $\mathcal{A}_{n,d,j} = \mathcal{S}_n\cup\mathcal{R}_{n,d,j}$
            \STATE $i^*$ = UCB($\mathcal{A}_{n,d,j}, \min(T - t, K'(K'+1)2^j)$)
            \STATE $t = t + K'(K'+1)2^j$
            \STATE Send $i^*$ to neighbors
            \STATE Receive most played arms of neighbors as $\mathcal{R}_{n,d,j}$
            \STATE $d = d+1$
            \ENDFOR
            \STATE $j = j + 1$
        \ENDFOR
	\end{algorithmic}
	\caption{{\NAM-GRAPH}{$\left(\mathcal{S}_n, G, T_0, T\right)$}}\label{alg:graph_collab}
\end{algorithm}

The \NAM-GRAPH algorithm further divides an epoch $j$ into $D$ sub-epochs indexed as $d$. The duration of each sub-epoch in epoch $j$ is $T_j = K'(K'+1)2^j$. Now, the \NAM-GRAPH algorithm restarts UCB algorithm for sub epochs (Line 6-12). Additionally, the agents now communicate after every sub-epoch, but, only with their neighbors. This gives the $K' \leq \lceil\frac{K}{N}\rceil + K_G$.

Note that results from sub-epoch $d$ of epoch $j$ are propagated throughout the graph by the time sub-epoch $d$ starts in epoch $j+1$. Hence, for $\Tilde{\Delta}_j\coloneqq \sqrt{\frac{16K'\log(T)}{T_j}}$, this approach allows to propagate arms with $\Delta_i \leq D\Tilde{\Delta}_{j-1}$ instead of $\sum_{j'=j-D}^j\Tilde{\Delta}_{j'}$. Based on this modification, we can bound the regret of \NAM-GRAPH algorithm and the number of bits required for communications by \NAM-GRAPH algorithm.

\begin{theorem}\label{thm:graph_regret_theorem}
Let $G = (V, E)$ be the graph representing the network structure of agents $n\in [N]$, and let $D$ be the diameter of the graph $G$ and let $K_G$ be the maximum degree of the vertices of the graph $G$. Then, the regret of any agent $n$ following \NAM-GRAPH algorithm is bounded by
\begin{align}
    R_n(T) \leq \Tilde{O}\left(D\sqrt{DK'T}\right),
\end{align}
where $K' = \lceil\frac{K}{N}\rceil + K_G$.
\end{theorem}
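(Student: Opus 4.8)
The plan is to mirror the proof of Theorem \ref{thm:main_threorem}, replacing the single-epoch structure with the sub-epoch structure and paying for the multi-hop degradation of recommendations. First I would observe that Lemma \ref{lem:UCB_max_play_arm} applies verbatim to each sub-epoch, since every sub-epoch runs a fresh instance of Algorithm \ref{alg:UCB}: an agent whose augmented set contains an arm of gap $\delta$ returns, with high probability, a most-played arm of gap at most $\delta + \tilde{\Delta}_j$, where $\tilde{\Delta}_j = \sqrt{16K'\log T/T_j}$ is controlled by the common sub-epoch length $T_j$ and $K' = \lceil K/N\rceil + K_G$.

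The key step, and the one that repairs the exponential blow-up of Corollary \ref{col:exp_regret_LCC_UCB}, is a propagation lemma. I would argue by induction on hop distance that within the $D$ equal-length sub-epochs of a single epoch $j$ the near-optimal arm identified by agent $1$ advances one hop per sub-epoch and thus reaches every node within the epoch, since the graph has diameter $D$. Because all $D$ sub-epochs share the same length $T_j$, each hop degrades the recommended arm's gap by the same additive $\tilde{\Delta}_j$, so an agent at distance $d$ holds an arm of gap at most $d\,\tilde{\Delta}_j \le D\,\tilde{\Delta}_j$. Consequently, at the start of epoch $j$ every agent already has in its recommendation set an arm of gap at most $D\,\tilde{\Delta}_{j-1}$, rather than the sum $\sum_{j'=j-D}^{j}\tilde{\Delta}_{j'}$ over geometrically growing epochs that drove the naive extension.

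With this reference arm in hand I would rerun the regret decomposition of Lemma \ref{lem:UCB_regret_bound} for each sub-epoch, taking $i^*$ to be the propagated arm of gap $D\,\tilde{\Delta}_{j-1}$. The term $\tilde{\Delta}_{j-1}T_j$ there becomes $D\,\tilde{\Delta}_{j-1}T_j = O(D\sqrt{K'T_j\log T})$, while the intra-set UCB exploration term $\sqrt{8K'T_j\log T}$ and the failure-probability term are unchanged, yielding a per-sub-epoch regret of $O(D\sqrt{K'T_j\log T})$ and, since each epoch contains $D$ sub-epochs, a per-epoch regret of $O(D^2\sqrt{K'T_j\log T})$. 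The earliest epochs (before recommendations have propagated) contribute only lower-order terms, handled exactly as the $j=0$ case in Theorem \ref{thm:main_threorem}.

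Finally I would sum over epochs as in the proof of Theorem \ref{thm:main_threorem}. Accounting for the $D$ sub-epochs per epoch gives $\sum_{j=0}^{J-1} D\,T_j = T$, hence $\sum_j T_j = T/D$ and $J = O(\log T)$. Applying Cauchy-Schwarz, $\sum_j \sqrt{T_j} \le \sqrt{J\sum_j T_j} = \sqrt{JT/D}$, so the total regret is $O(D^2\sqrt{K'\log T}\cdot\sqrt{JT/D}) = \tilde{O}(D^{3/2}\sqrt{K'T}) = \tilde{O}(D\sqrt{DK'T})$, using $D^{3/2} = D\sqrt{D}$. I expect the propagation lemma to be the main obstacle: one must verify that a good arm, once present in an agent's recommendation set, is with high probability identified as most-played and forwarded in each sub-epoch, so that the union bound over the $O(ND\log T)$ sub-epoch UCB instances still leaves the degradation at the clean $D\,\tilde{\Delta}_{j-1}$ rather than accumulating extra failure events across hops.
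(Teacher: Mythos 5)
Your proposal matches the paper's own proof in all essential respects: the same propagation argument (equal-length sub-epochs make each hop degrade the recommendation by a uniform $\tilde{\Delta}_{j-1}$, yielding a reference arm of gap at most $D\tilde{\Delta}_{j-1}$ rather than a geometric sum), the same per-sub-epoch extension of Lemma \ref{lem:UCB_regret_bound} with the leading term inflated by $D$, and the same Cauchy--Schwarz summation over the $DJ$ sub-epochs giving $\tilde{O}(D^{3/2}\sqrt{K'T})$. The only cosmetic difference is bookkeeping: the paper tracks propagation across the epoch boundary from sub-epoch $(d,j-1)$ to $(d,j)$ and carries a factor $D$ on the (lower-order) failure-probability term, while you track propagation within one epoch and flag the hop-wise union bound as a step to verify.
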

\begin{proof}
Note that at the beginning of the phase of a sub-epoch $d$ in epoch $j$, the information from the farthest node $D$ edges away is also received for epoch $j-1$ sub-epoch $d$. This is because exactly $D$ communications happens between sub-epoch, epoch pair $d, j-1$ and $d, j$. Further, each intermediate $D$ nodes drifts from the optimal arm found in sub-epoch, epoch $d, j-1$ by at most $\Tilde{\Delta}_{j-1}$. This suggest that instead of receiving an arm with $\Delta_i \leq \Tilde{\Delta}_{j-1}$, the node actually receives an arm $i^* = \arg\max_{i\in\mathcal{A}_{n,d,j}}\mu_i$ with $\Delta_{i^*} \leq D\Tilde{\Delta}_{j-1}$. Hence, extending Lemma \ref{lem:UCB_regret_bound} with $D$ hops, the regret $R(d,j)$ in each sub-epoch $d$ and epoch $j$ is now upper bounded as
\begin{align}
    R(d,j) &\leq 2(2D+1)\sqrt{2K'T_j\log T} + \frac{16DK'^3}{T_j} + 2K'\label{eq:sub_epoch_regret_bound}
\end{align}
In Equation \eqref{eq:sub_epoch_regret_bound}, the extra factors of $D$ comes from the fact that now each of the agents in $D$ hops recommends an arm $i$ such that $\mu_{i_d^*} \geq \mu_{i_{d-1}^*}-\Tilde{\Delta}_j$ for all $d\geq 1$ and $i_0^* = 1$, the true best arm. Note that the duration of any sub-epoch $d$ is $K'(K'+1)2^j$ and it depends only on the epoch $j$. Hence, the regret $R(d, j)$ is only a function of epoch count $j$.

The total regret of the agent $n$, which is the sum of regrets over all sub-epochs in every epoch, can now be bounded as:
\begin{align}
R_n(T) &= \sum_{j=0}^{J-1}\sum_{d=1}^DR(d,j)\nonumber\\
       &= \sum_{j=1}^{J-1}\sum_{d=1}^DR(d,j) + \sum_{d=1}^DR(d,0)\nonumber\\
       &= \sum_{j=1}^{J-1}\sum_{d=1}^D\Big(2(2D+1)\sqrt{2K'T_j\log T}\nonumber\\
       &~~~ + \frac{16DK'^3}{T_j} + 2K'\Big) + \sum_{d=1}^DK'(K'+1)\\
       &= 2(2D+1)\sqrt{DJ\sum_{d=1}^D\sum_{j=0}^{J-1}2K'T_j\log T}\nonumber\\
       &~~~+ DJ\frac{16DK'^3}{T_j} + 2DJK'+ DK'(K'+1) \nonumber\\
       &= 4(2D+1)\sqrt{K'DT(\log_2(2T+1))\log T}\nonumber\\
       &~~~ + 16D^2K' + 2K'D\log_2(2T+1) + DK'(K'+1)\nonumber
\end{align}
\end{proof}

The key novelty of \NAM-GRAPH algorithm is to let sub-epochs $0\leq d< D$ collect the messages from the entire graph. The equal length of each sub-epoch avoids the exponential blow-up in the regret. Further, the exponential length of each epoch $j$ still keeps the total messages in logarithmic order of $T$.

\begin{theorem}\label{thm:graph_comm_theorem}
For \NAM\-GRAPH algorithm, total number of bits exchanged by an agent is bounded by $O\left(K_GD\log(K)\log(T)\right)$.
\end{theorem}
\begin{proof}
An agent sends or receives only arm index, which requires $\log_2(K)$ bits. The agent communicates at the end of every sub-epoch of every epoch. In each communications, the agents talks to at most $K_G$ neighbors and sends and receives $2K_G\log_2(K)$ bits. Finally, there are $D$ sub-epochs in every $\log_2(T)$ epochs. This bounds the total number of bits as $O\left(DK_G\log(K)\log(T)\right)$.
\end{proof}

Results from Theorem \ref{thm:graph_regret_theorem} and Theorem \ref{thm:graph_comm_theorem} suggest that it is possible to reduce the regret from an exponential order of the diameter $D$ of the graph $G$ at the expense of $D$ times more communication rounds. Further, since each communication involves only exchange of arm indices, the cost of communications is not high ($O(K_G\log_2 K)$ bits) for power constrained devices such as sensor networks. 
\begin{figure*}[ht]
    \centering
    \begin{subfigure}[b]{0.32\textwidth}
        \centering
        \includegraphics[width=\textwidth]{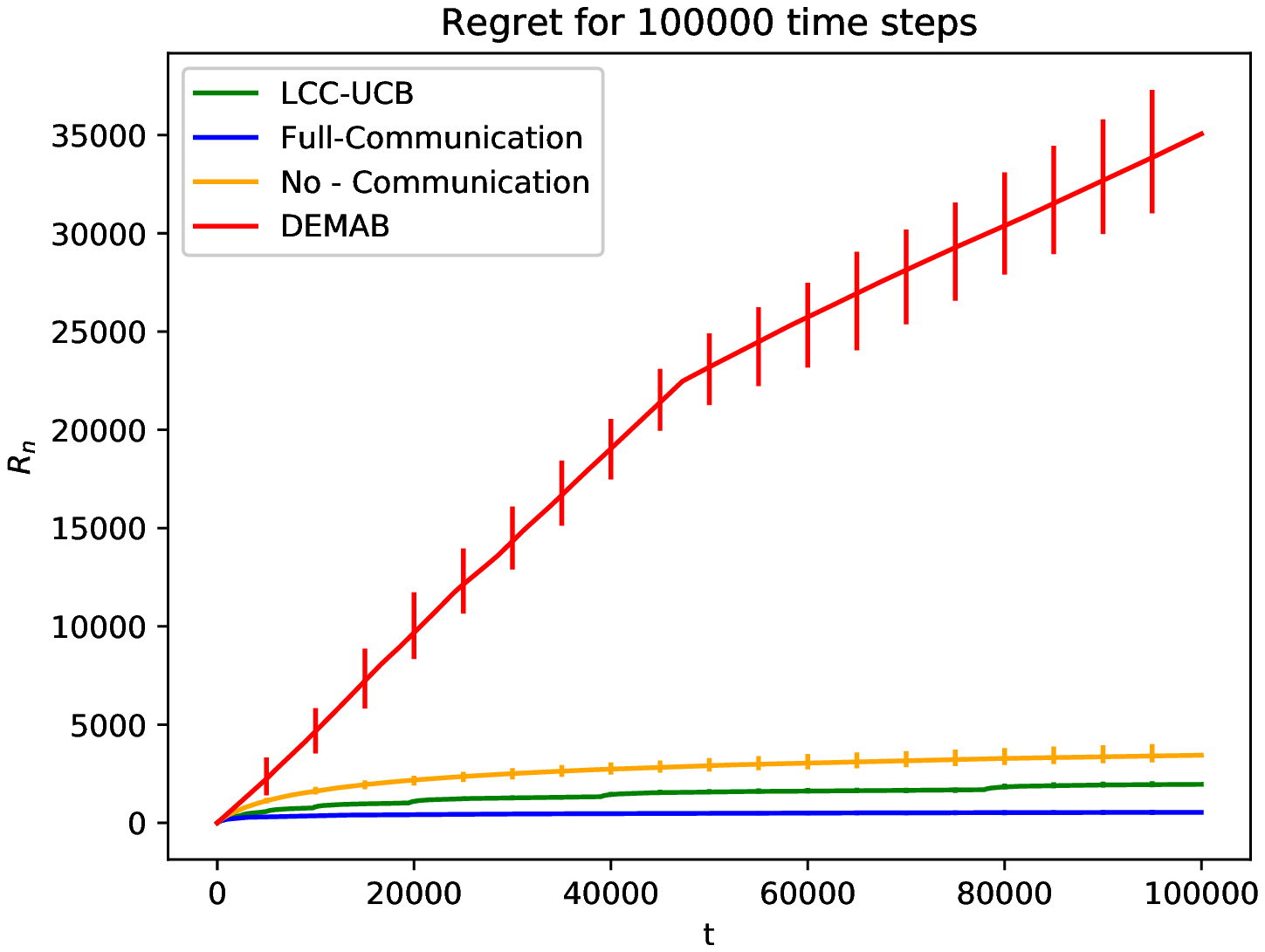}
        \caption{$(N, K)=(10, 100)$}
        \label{fig:fc_10_100_demab}
    \end{subfigure}
    \begin{subfigure}[b]{0.32\textwidth}
        \centering
        \includegraphics[width=\textwidth]{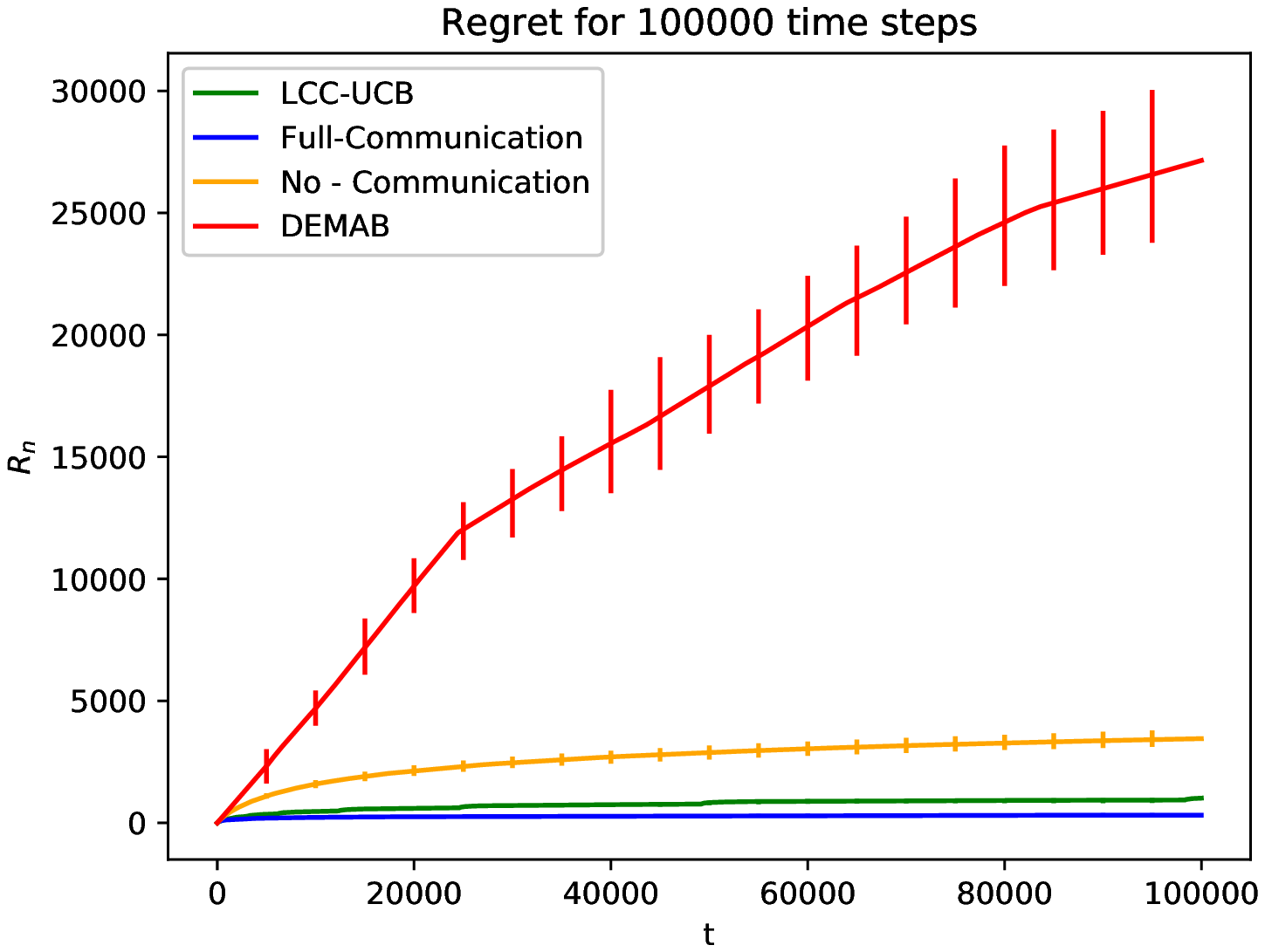}
        \caption{$(N, K)=(20, 100)$}
        \label{fig:fc_20_100_demab}
    \end{subfigure}
    \begin{subfigure}[b]{0.32\textwidth}
        \centering
        \includegraphics[width=\textwidth]{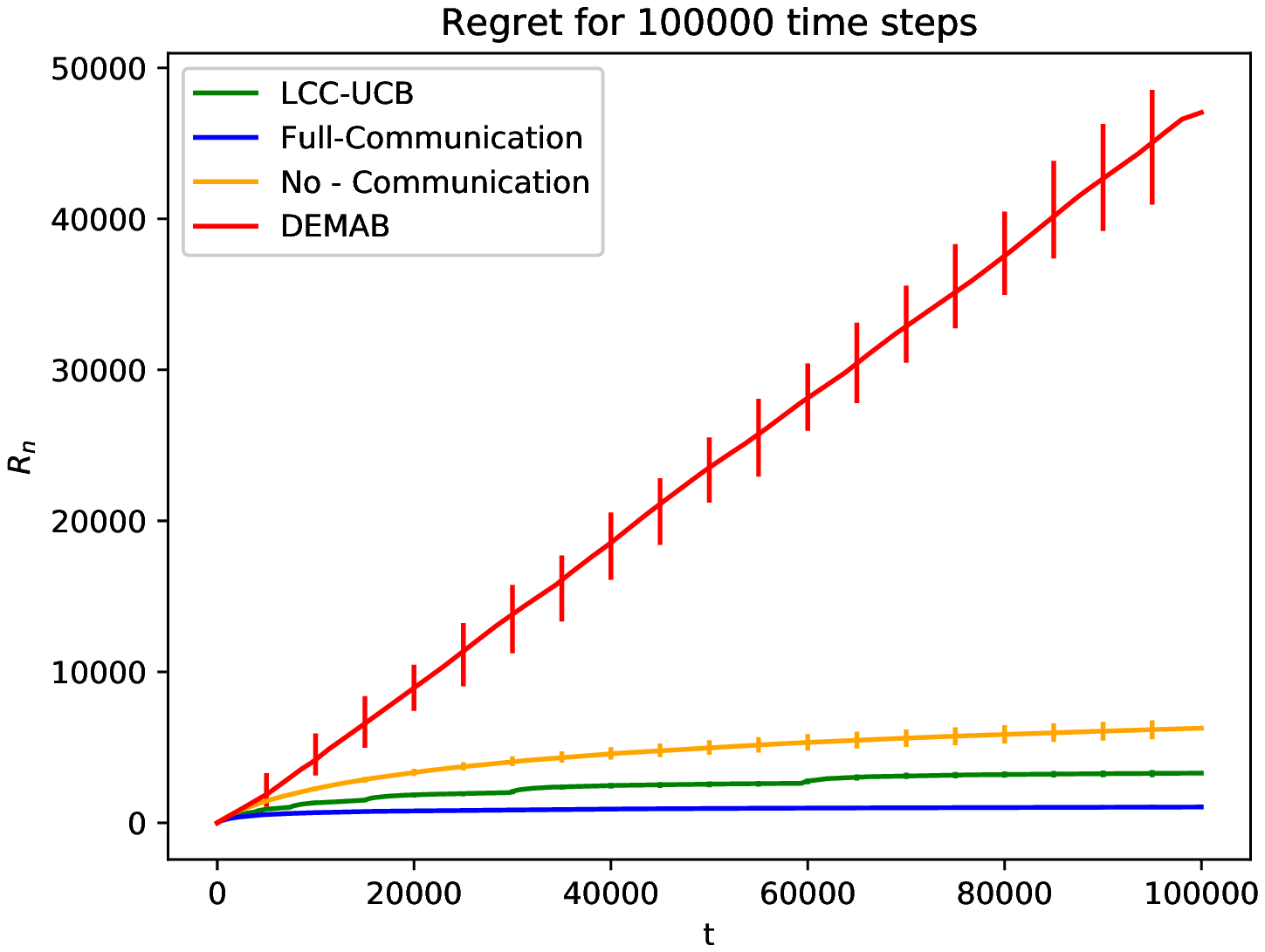}
        \caption{$(N, K)=(10, 200)$}
        \label{fig:fc_10_200_demab}
    \end{subfigure}
    \caption{Per-step cumulative regret for a single agent under various communication strategies.}
    \label{fig:per_step_regret_plot_with_DEMAB}
\end{figure*}

\begin{figure*}[ht]
    \centering
    \begin{subfigure}[b]{0.32\textwidth}
        \centering
        \includegraphics[width=\textwidth]{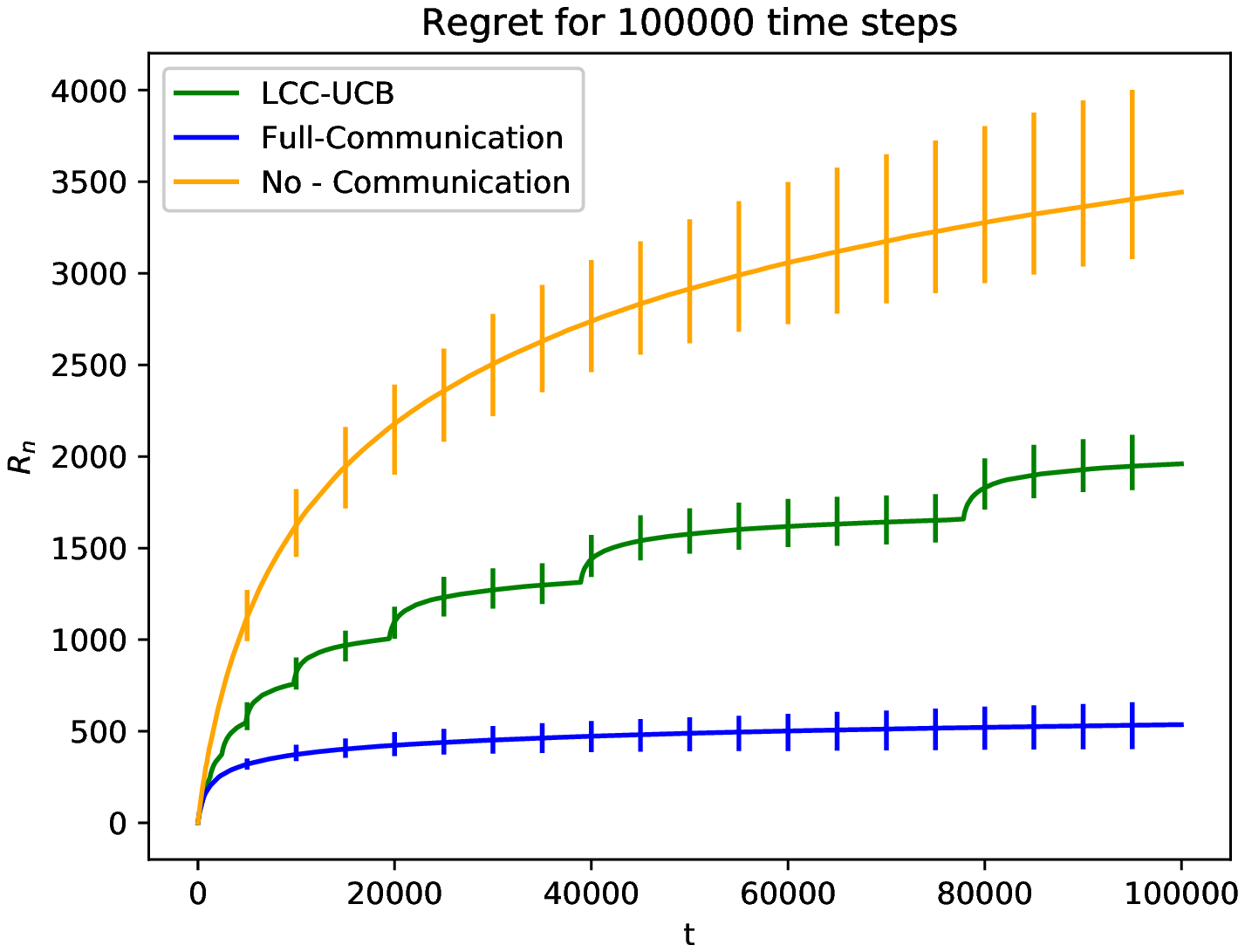}
        \caption{$(N, K)=(10, 100)$}
        \label{fig:fc_10_100}
    \end{subfigure}
    \begin{subfigure}[b]{0.32\textwidth}
        \centering
        \includegraphics[width=\textwidth]{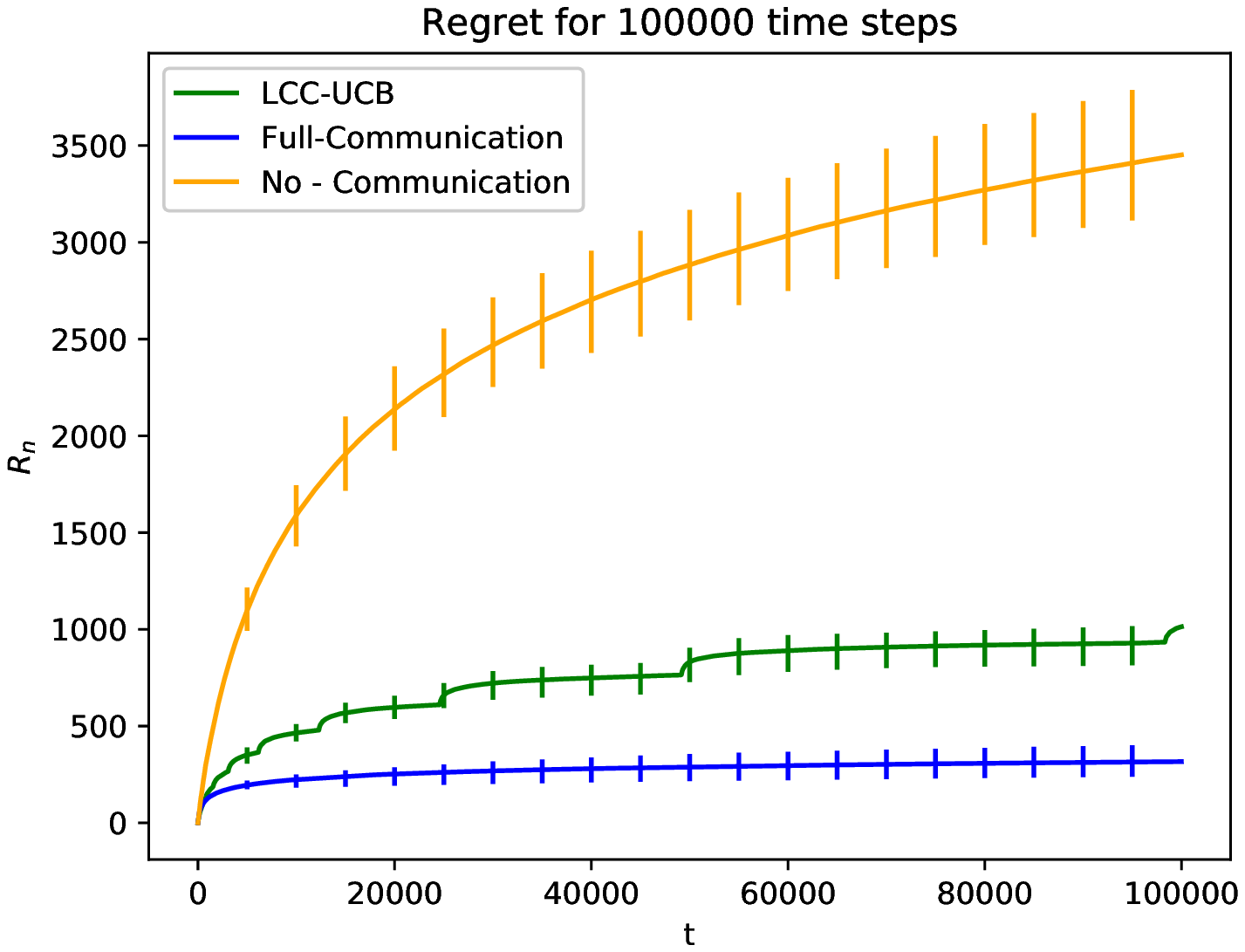}
        \caption{$(N, K)=(20, 100)$}
        \label{fig:fc_20_100}
    \end{subfigure}
    \begin{subfigure}[b]{0.32\textwidth}
        \centering
        \includegraphics[width=\textwidth]{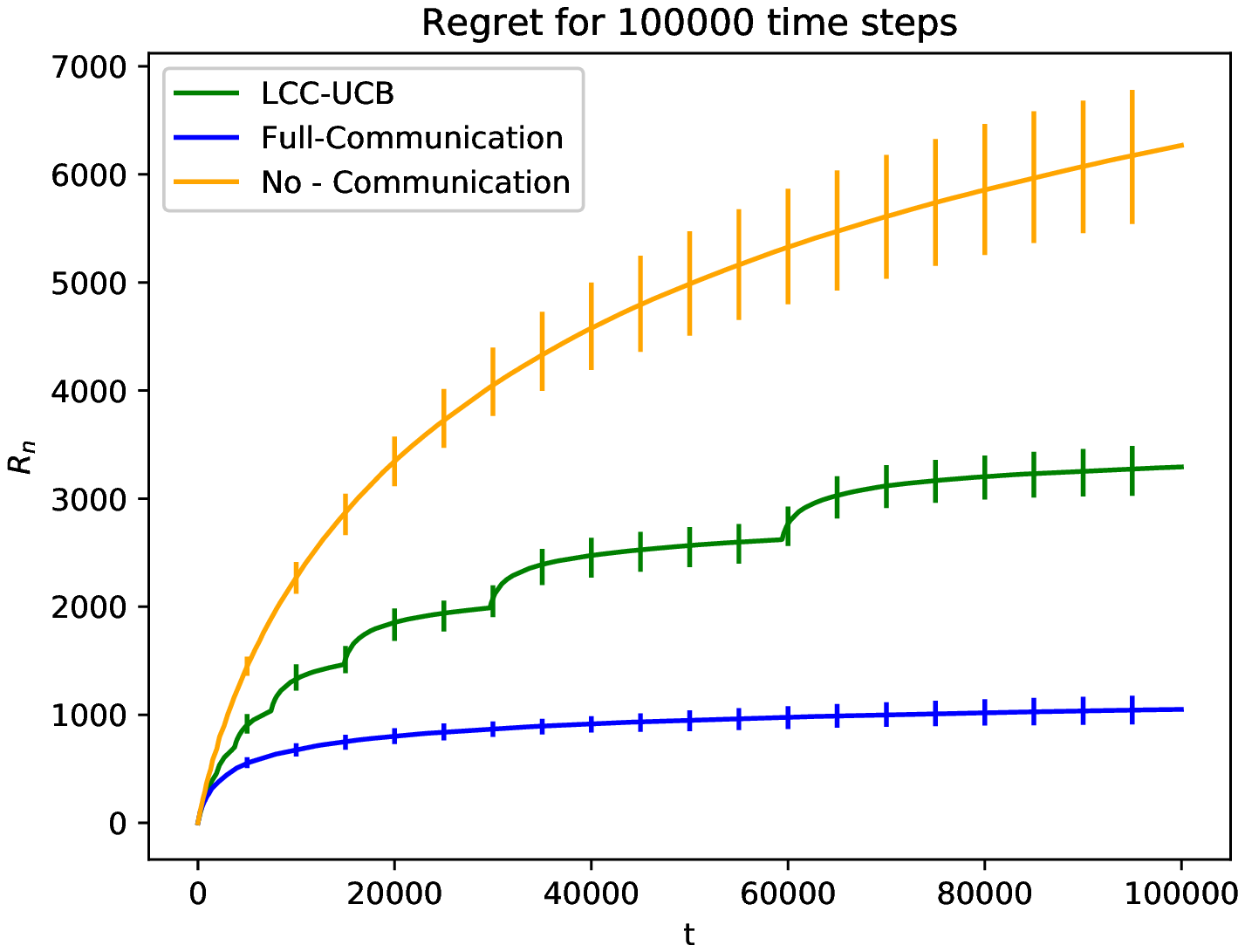}
        \caption{$(N, K)=(10, 200)$}
        \label{fig:fc_10_200}
    \end{subfigure}
    \caption{Per-step cumulative regret for a single agent under various communication strategies. (Excluding plots from DEMAB algorithm to the regret growth of other algorithms)}
    \label{fig:per_step_regret_plot}
\end{figure*}
\begin{figure*}[th]
    \centering
    \begin{subfigure}[b]{0.32\textwidth}
        \centering
        \includegraphics[width=\textwidth]{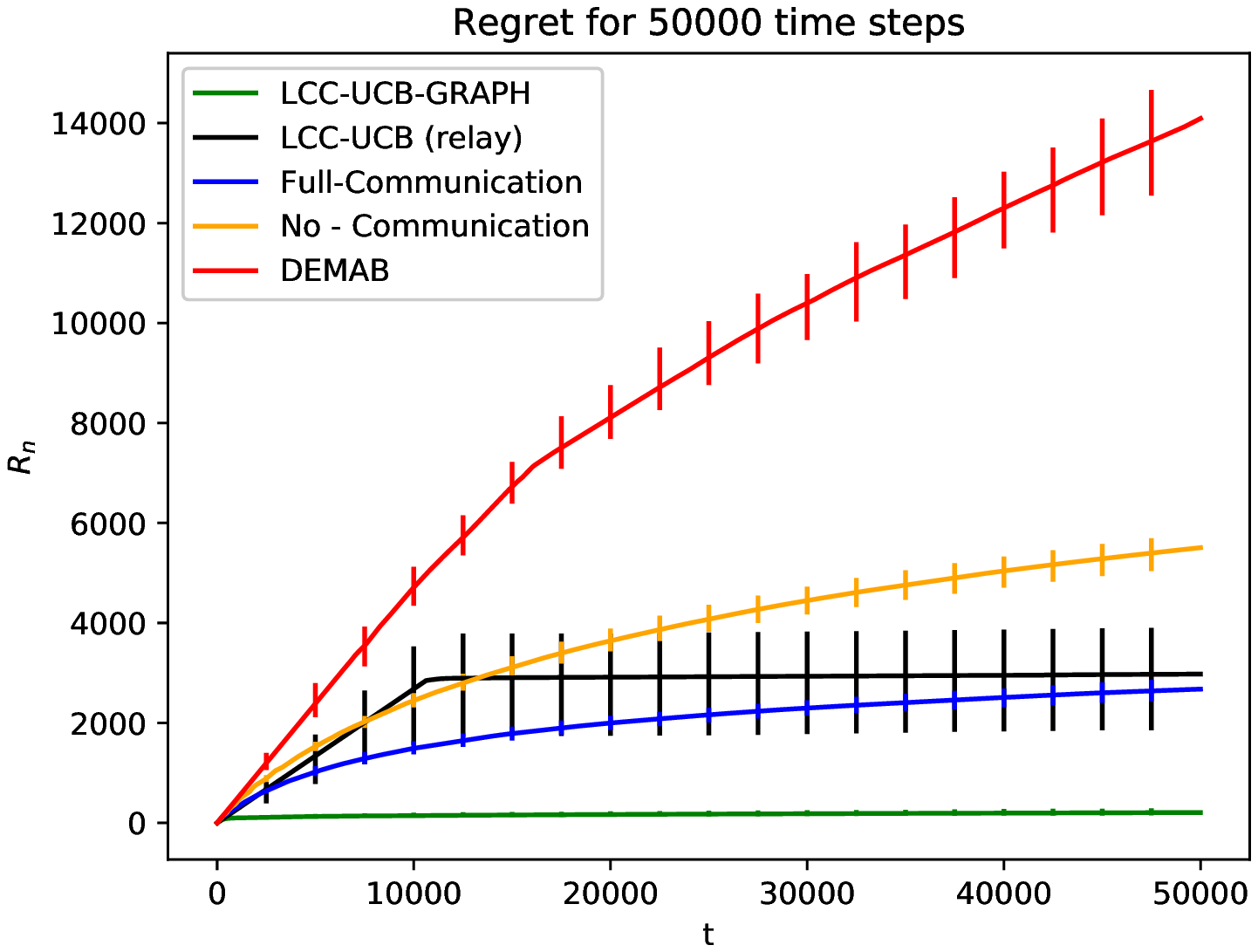}
        \caption{$(N, K)=(100, 250)$}
        \label{fig:sc_100_150}
    \end{subfigure}
    \begin{subfigure}[b]{0.32\textwidth}
        \centering
        \includegraphics[width=\textwidth]{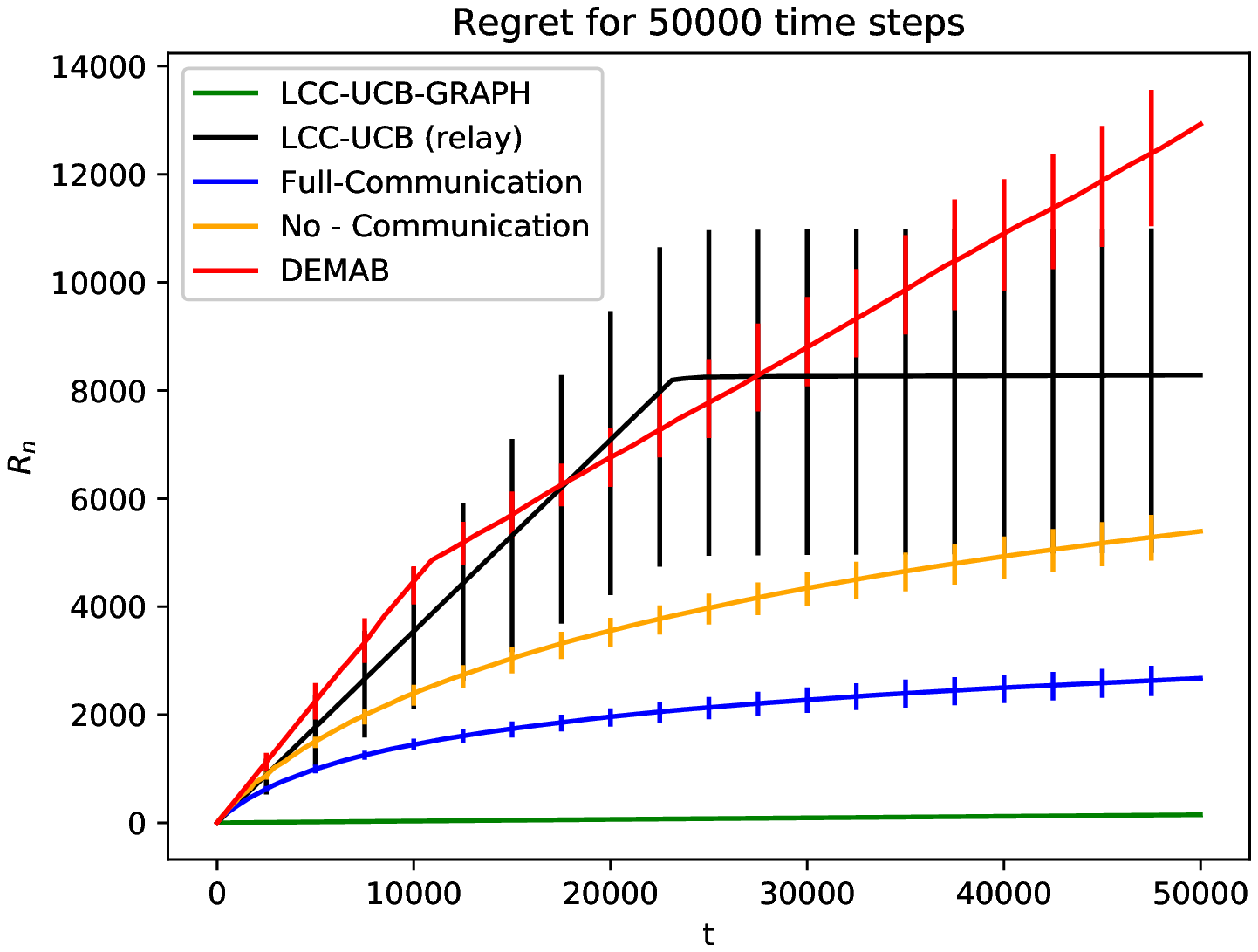}
        \caption{$(N, K)=(150, 250)$}
        \label{fig:sc_150_250}
    \end{subfigure}
    \begin{subfigure}[b]{0.32\textwidth}
        \centering
        \includegraphics[width=\textwidth]{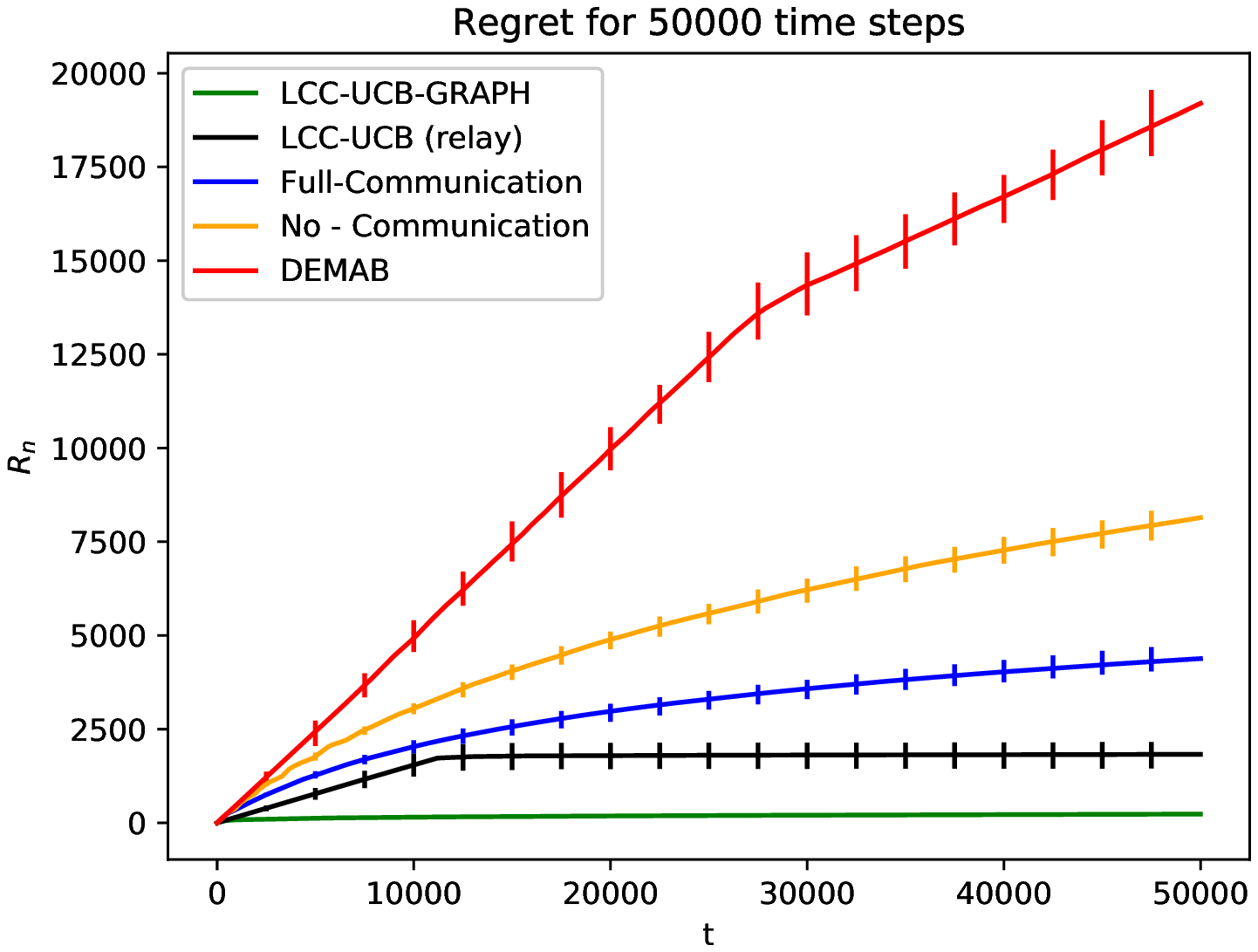}
        \caption{$(N, K)=(100, 500)$}
        \label{fig:sc_100_500}
    \end{subfigure}
    \caption{Per-step cumulative regret for a single agent, in a sparse graph, under various communication strategies.}
    \label{fig:per_step_regret_plot_graph}
\end{figure*}
\section{Evaluations}

We consider various problem setups to evaluate our algorithms. We compare with the setting where agents can communicate with their neighbors every time and with the setting where agents do not communicate with anyone for the entire time horizon. We also compare with the DEMAB algorithm, proposed by \cite{wang2020optimal}, which requires only $O(M\log (MK)$ communication rounds for known time horizons.

We first present the comparison results for Algorithm \ref{alg:collab}. We consider a horizon of $T = 10^5$ steps. We study the behaviour of the algorithm by varying the number of agents $N$ and the number of arms $K$. We choose three pairs $(N, K)$, which are $(10, 100)$, $(20, 100)$, $(10, 200)$. We present the result in Fig. \ref{fig:per_step_regret_plot_with_DEMAB} for $30$ independent runs for expected rewards drawn from uniform $\mathbb{U}(0,1)$ distribution. We plot the median of the cumulative regret incurred by a single angle at each time step and the $95\%$ confidence intervals. 

We first note that the regret of the DEMAB algorithm is even larger than the no-communication strategy. The high regret in the DEMAB algorithm is expected because the algorithm purges the observations collected after each epoch. Further, the agents do not share the knowledge of the best arm and continue to redivide the remaining arms to quickly eliminate the bad arms, and hence not all agents are able to exploit the best arm. This results in the high regret of the algorithm.
To show the scale between the remaining communication strategies, we plot the regret curves with the DEMAB algorithm in Figure \ref{fig:per_step_regret_plot}.


The start of an epoch $j$ can be observed as the jumps in the cumulative regret. We observe that the initial epochs incur the largest regret despite the duration $T_j$ being small. This is because the agents are not aware of the best arm yet and are exploring from possibly worst arms. Also, the regret grows very slowly in the later phase because most agents send the same arm index (the optimal arm) and the effective regret in the later rounds increase only as $\Tilde{O}\left(\sqrt{\left(1 + \lceil N/K \rceil\right)T_j}\right)$, instead of the upper bound of $\Tilde{O}\left(\sqrt{\left(N-1 + \lceil N/K \rceil\right)T_j}\right)$. We note that for small number of agents $N$ compared to the number of arms $K$, $(N, K) = (10, 100)$ and $(N, K)=(10, 200)$, the algorithm performs closer to the optimal case where the agents could communicate with each other as observed from Fig. \ref{fig:fc_10_100} and Fig. \ref{fig:fc_10_200}. This is because of the reduced overhead of re-sampling new arms obtained from all the agents.

We now evaluate the proposed \NAM-GRAPH algorithm on sparse graphs. We specifically consider Erd\H{o}s-R\'{e}nyi graphs $G(N, p)$ where $N \geq 100$ vertices are a swarm of $N$ agent. Also, $p = 10/N\geq \ln{N}/N$ is the edge selection probability. This gives an expected number of total edges in the graph to be $5N$. We consider only connected graphs (If the resulting graph is not connected, we sample another graph.). Once initiated, the graph does not changes structure over the subsequent time steps. This setup is typically used in placement of IoT devices communicating with only neighbors \cite{avner2016multi,sankararaman2019social}.

We again consider $3$ cases of  $(N, K)$ that are $(100, 250)$, $(150, 250)$, and $(100, 500)$. We present the result in Fig. \ref{fig:per_step_regret_plot_graph} for $30$ independent runs. Along with the expected rewards of the arms, graph structure is also different for each run. We plot the median of the cumulative regret incurred by a single angle at each time step and the $95\%$ confidence intervals. 

We note that for $K=250$, the performance is similar for $N=100$ (Fig. \ref{fig:sc_150_250}) and $N=150$ (Fig. \ref{fig:sc_150_250}). This is expected for no-communication strategy as the number of arms are same. For \NAM-GRAPH algorithm, this makes sense as the degree of the graph $K_G$ is higher than the the number of arms allocated to every agent $\lceil K/N\rceil$. For full communication strategy, this happens because the expected degree of each agent is same for both graphs. Each agent can access data from only neighbors, and that remains same. On doubling $K$ from $250$ to $500$, we observe that the regret increases at lower rate for \NAM-GRAPH than for the other two strategies. This is again attributed to the fact that $K_G$ dominates $\lceil K/N\rceil$ term in regret. We note that the performance of the DEMAB algorithm is still sub-par to the all the other three strategies. Note that the \NAM-GRAPH algorithm accumulates extremely low regret because of the reduced arms per agent $(\leq 5)$ and the degree of any node is also very low as we considered sparse $G(N, p)$ graphs with $p = 10/N$.

As expected, we note that the proposed strategy performs better than the no communication strategy. Further, we note that the proposed strategy even outperforms the strategy where communication happens after every time step and lags behind in initial time steps only. This is because an agent only shares what it knows with its neighbors and thus is not able to fully utilize the graph with $N$ agents. For the initial time steps, the \NAM-GRAPH algorithm performs pure exploration, hence incurs regret.

We also compare the performance of the \NAM-GRAPH algorithm against a modified \NAM\ algorithm which relays messages from other nodes. This modification allows every agent to receive recommendations from all the other agents after every epoch. However, the performance of the \NAM-GRAPH algorithm is significantly better than the relay based \NAM\ algorithm which justifies the sub-epoch based modification used in \NAM-GRAPH. \NAM\ algorithm wastes a significant portion of the time to generate good recommendations and hence incur a large regret. The better performance of the \NAM-GRAPH algorithm is because after each epoch, an agent only receives arm updates from its neighbors, and hence, the $\sqrt{K/N + K_G}$ term in regret is very small.

\section{Conclusion}
We considered the problem of reducing communications between $N$ agents and minimizing the regret of agents interacting with an instance of a Multi Armed Bandit problem with $K$ arms for time horizon $T$. We proposed two algorithm \NAM\ for fully connected networks and \NAM-GRAPH for sparse networks with maximum degree $K_G$ and diameter $D$. We analyzed the algorithms and obtain regret bound of $\Tilde{O}(\sqrt{T(N+K/N)})$ and $\Tilde{O}(D\sqrt{D(K/N + K_G)T})$ for \NAM\ and \NAM-GRAPH algorithms respectively. 
We found that the algorithms perform well empirically with the \NAM-GRAPH algorithm outperforming every time communication strategy in which an agent shares knowledge only with its neighbors. 
Further, both the \NAM\ and the \NAM-GRAPH algorithm beat the existing state of the art results. 
Additionally, the low bit complexity for communication in both the algorithms makes them a suitable choice for power constrained devices.

\bibliographystyle{apalike}
\bibliography{refs.bib}
\end{document}